\documentclass[3p,times]{elsarticle}
\usepackage{lineno}
\modulolinenumbers[5]
\usepackage[utf8]{inputenc} % allow utf-8 input
\usepackage[T1]{fontenc}    % use 8-bit T1 fonts
\usepackage{hyperref}       % hyperlinks
\usepackage{url}            % simple URL typesetting
\usepackage{booktabs}       % professional-quality tables
\usepackage{amsfonts}       % blackboard math symbols
\usepackage{nicefrac}       % compact symbols for 1/2, etc.
\usepackage{microtype}      % microtypography
\usepackage{xcolor}         % colors
\usepackage{algorithm}
\usepackage{algpseudocode}
\usepackage{amsmath}
\usepackage{amsthm}
\usepackage{amssymb}
\usepackage{multirow}
\usepackage{graphicx}
\usepackage{subfig}

\usepackage{subcaption}
\usepackage{adjustbox}
\usepackage{mathrsfs}
\algtext*{EndIf}
\algtext*{EndFor}
\algtext*{EndProcedure}
\newtheorem{prop}{Proposition}
\newcommand{\bogus}[1]{}

\biboptions{authoryear}

\begin{document}
\begin{frontmatter}

\title{GeoQ: Geometry-Aware Conditional Quantile Error Estimation for Scientific Surrogate Models}

\author[lanl]{Khoa Nguyen\fnref{equalcontrib}}
% \corref{mycorrespondingauthor}\cortext[mycorrespondingauthor]{Corresponding Author}
% \ead{nguyen_k@lanl.gov}
\author[lanl]{Daniel Serino\fnref{equalcontrib}}
\author[lanl]{Aviral Prakash}
\author[lanl]{Marc Klasky}

\fntext[equalcontrib]{These authors contributed equally to this work.}

\address[lanl]{Los Alamos National Laboratory, Los Alamos, NM 87545, USA}

\date{}

\begin{abstract}
Neural-network surrogate models are increasingly used to accelerate scientific simulations, but their deployment in extrapolative and autoregressive settings requires input-dependent estimates of prediction error. In this work, we introduce GeoQ (Geometry-Aware Conditional Quantile Error Estimation), a non-intrusive calibration framework for estimating surrogate error at individual query points. GeoQ represents the error at a query point as an anchor-averaged calibration error plus a learned nonnegative correction. This correction is modeled as an upper conditional quantile of the anchor-relative error increment, using geometry-based features that encode representation-space displacement and local support density. A cross-fitting procedure generates approximately out-of-sample calibration tuples, while a feature-space $k$-nearest-neighbor support score identifies regions \textcolor{black}{where the learned error model is supported by calibration data}. We evaluate GeoQ on scalar regression, chaotic dynamics, medium-range weather forecasting, and Richtmyer-Meshkov instability prediction. The results demonstrate that geometry-aware conditional quantile modeling provides a practical and non-intrusive approach for validity-aware error estimation in scientific surrogate models.
\end{abstract}

\begin{keyword}
Error estimation; Uncertainty quantification; Scientific surrogate modeling; Quantile regression
\end{keyword}

\end{frontmatter}

\section{Introduction}

Neural-network surrogate models are increasingly used to approximate expensive scientific simulations and solution operators arising in fluid dynamics, weather prediction, and operator-learning problems that map parameters, initial conditions, or forcing terms to quantities of interest~\citep{li2021fourier,kovachki2023neural,pathak2022fourcastnet}. These models can provide substantial computational speedups and enable parameter studies, inverse problems, design optimization, and long-time prediction that would be difficult with repeated high-fidelity simulation. In many scientific workflows, however, the surrogate is embedded in a larger decision or prediction loop. For instance, it may be queried within an optimizer, evaluated across a design space, used inside an inverse problem, or advanced autoregressively over many time steps. 
In such settings, it is important to understand how prediction error depends
on the specific query, since localized errors can influence downstream
optimization, inference, and long-horizon prediction. This motivates
query-dependent error estimators that assess surrogate accuracy at individual
inputs.
%In such settings, the relevant question is how prediction error varies as a function of the query point in the input domain. This motivates query-dependent error estimators that report surrogate accuracy as a function of the query.

Standard validation workflows give only a partial answer to this problem. Validation loss, test error, cross-validation, and early stopping are essential tools for model selection and overfitting control, but they mainly summarize average performance over a chosen validation or test distribution~\citep{stone1974cross,hastie2009elements,mohri2018foundations}. Classical generalization theory gives a complementary view by relating empirical risk, model complexity, and expected prediction error, but its guarantees are typically distribution-level statements. These tools are foundational, but they do not directly provide an input-dependent estimate of surrogate error, especially near the boundary of the training support or beyond it. This gap is especially important for modern overparameterized neural networks, where tight and practically useful generalization bounds remain challenging~\citep{zhang2017rethinking,dziugaite2017nonvacuous}.

In this work, we introduce \textit{GeoQ} (\textit{Geo}metry-Aware Conditional \textit{Q}uantile Error Estimation), a non-intrusive framework for estimating surrogate error at individual query points using calibration data with known reference outputs. The construction is motivated by a simple regularity principle: if the reference map and surrogate vary in a controlled way over a region of interest, then the error at a query should be related to errors at geometrically nearby reference points, which we call \emph{anchors}. Because a global worst-case bound based on Lipschitz continuity is often overly conservative, GeoQ uses the anchor errors to define a local baseline and learns a nonnegative correction through quantile regression~\citep{koenker1978regression,koenker2001quantile}. At a high level, for a query point $x_q\in\mathcal X$, GeoQ estimates the surrogate error $E(x_q)$ as
\begin{equation}
\widehat E(x_q)
=
\overline E_{\mathcal A}(x_q)
+
g_\eta\bigl(\psi(x_q,\mathcal A)\bigr),
\label{eq:geoq_intro_estimator}
\end{equation}
where $\mathcal A$ denotes a set of nearby calibration anchors, $\overline E_{\mathcal A}(x_q)$ is the average of their observed surrogate errors, and $g_\eta$ estimates an upper conditional quantile of the nonnegative anchor-relative error increment.  \textcolor{black}{Depending on the application, $E(x_q)$ may denote a scalar error metric, a vector of componentwise errors, or a pointwise error field.} 

The geometric features $\psi(x_q,\mathcal A)$ describe the query's
displacement from its anchors and its local support in a chosen representation
space. This space may be the physical input space when its geometry is
meaningful or a learned low-dimensional representation.

Training data for the correction model are constructed through structured
cross-fitting. The dataset is partitioned into folds chosen to create
meaningful separation in the input or representation space. For each fold, an
auxiliary surrogate is trained on the remaining data and evaluated on the
held-out fold, producing approximately out-of-sample errors and query--anchor
configurations that mimic controlled interpolation and extrapolation. These
auxiliary surrogates are used only to generate calibration examples and are
discarded after calibration. GeoQ is then applied to the fixed deployed
surrogate without modifying its architecture or training objective. To assess
whether this learned calibration is applicable at a new query, GeoQ also
computes a feature-space support score that measures how closely the query's
geometric configuration resembles those observed during calibration. This
score identifies regions where the learned correction is supported by the
available calibration data and regions where the resulting error estimate
should be interpreted with greater caution. The resulting estimates are
defined relative to the data-generating process represented by the calibration
data and do not account for model-form discrepancy between different
simulators, experiments, or physical systems
\citep{kennedy2001bayesian,brynjarsdottir2014model}.

The main contributions of this work are the GeoQ framework, a geometry-aware and fully non-intrusive approach to surrogate error estimation, a calibration strategy that learns upper conditional quantiles of error growth from anchor-query relationships generated through structured cross-fitting, and an explicit support score that indicates when an error estimate is being applied outside the region supported by calibration data. We demonstrate the approach on problems ranging from scalar regression and chaotic dynamics to weather forecasting and Richtmyer--Meshkov instability prediction. 
%GeoQ estimates error relative to the same data-generating process used for calibration and does not address full model-form discrepancy between different simulators, experiments, or physical systems~\citep{kennedy2001bayesian,brynjarsdottir2014model}.

The remainder of the paper is organized as follows. Section \ref{sec:related_work} reviews related approaches for uncertainty estimation and surrogate-model error assessment. Section \ref{sec:method} introduces the GeoQ methodology, including the anchor-based error decomposition, geometric feature construction, cross-fitted calibration procedure, conditional quantile model, and feature-space support score. Section \ref{sec:numerical_results} presents the experimental setup and benchmark problems, GeoQ performance, and the comparison with baseline uncertainty-estimation methods. Finally, Section \ref{sec:conclusion} summarizes the main findings, discusses limitations, and outlines directions for future work.

\section{Related work}\label{sec:related_work}

\paragraph{Validation, generalization, and error assessment}
Standard validation workflows assess surrogate reliability through validation loss, test error, cross-validation, and related model-selection procedures~\citep{stone1974cross,hastie2009elements,mohri2018foundations}. These tools are essential for estimating average predictive performance on a held-out distribution and for controlling overfitting during training. Classical statistical learning theory provides a complementary perspective by relating empirical risk, model complexity, and expected generalization error. This includes Vapnik--Chervonenkis (VC) and probably-approximately-correct (PAC) bounds for classification~\citep{vapnik2000nature,valiant1984theory}, Rademacher-complexity and covering-number bounds for real-valued prediction~\citep{bartlett2002rademacher}, algorithmic stability~\citep{bousquet2002stability}, PAC-Bayesian analysis~\citep{mcallester2003pac}, and regularized regression methods such as support vector regression~\citep{smola2004tutorial}. These results are foundational for understanding population-level generalization. However, they do not indicate whether a deployment point is geometrically well supported by the training or calibration data or whether it lies in a regime of extrapolation.

\paragraph{Geometry, applicability domains, and support-aware reliability}
The present work is closest in spirit to methods that connect reliability to geometry, support, or applicability domains. In Gaussian-process and kernel-based models, predictive uncertainty depends on the covariance kernel and on the location of training data~\citep{rasmussen2006gpml}. More broadly, distance-, density-, and novelty-based criteria are often used to determine whether a new query resembles the data used for model construction. These ideas are especially relevant in scientific machine learning, where raw Euclidean distance may fail to capture the structure of the data. High-dimensional states may lie near lower-dimensional manifolds~\citep{tenenbaum2000global,coifman2006diffusion}, exhibit coherent regimes, or vary primarily along active directions~\citep{constantine2015active}. Metric learning provides another perspective by adapting distances to task-relevant notions of similarity~\citep{kulis2013metric}. 

% GeoQ applies these ideas to non-intrusive error estimation by using anchor-relative displacement to model error growth with a conditional quantile estimator, translating geometric information into calibrated error estimates rather than treating it solely as a novelty score.

\begin{table}[t]
    \centering
    \caption{Qualitative comparison of common approaches for uncertainty and error estimation in scientific surrogate models. Applicability-domain awareness indicates whether the method explicitly incorporates distance or support information relative to training or calibration data. OOD-risk detection refers to the ability to flag queries outside the training or calibration support. Non-intrusive indicates whether the method can be applied to a pretrained surrogate without modifying its architecture or training objective. Entries summarize typical implementations; individual variants may differ.\label{tab:MethodComparisonTheoretical}}    
    
    \resizebox{\columnwidth}{!}{%
    \begin{tabular}{|c|c|c|c|c|c|}
        \hline
         \textbf{Method class} & \textbf{Applicability domain/} & \textbf{OOD-risk} & \textbf{Non-intrusive} & \textbf{Training} & \textbf{Inference} \\
         & \textbf{awareness} & \textbf{detection} & & \textbf{Scalability} & \textbf{Scalability} \\
        \hline         
         Bayesian neural network & No & Weak-Moderate & No & Medium & Medium \\
         Deep ensemble & No & Moderate & Yes & Medium$^\ast$ & Medium$^\ast$ \\
         Conformal prediction & No & Weak & Yes & High & High \\
         Generative model & No & Weak & No & Medium & Medium \\
         Gaussian process & Yes & Strong & No & Low & Low \\
         GeoQ & Yes & Strong & Yes & Medium$^\ast$ & High \\
        \hline
    \end{tabular}
    }
    \begin{flushleft}
        \footnotesize{$^\ast$Cost varies with the number of random seeds, ensemble members, or calibration folds used.}
    \end{flushleft}
\end{table}

\paragraph{Uncertainty quantification in scientific surrogates}
There is a growing literature on estimating uncertainties in scientific surrogate models such as neural operators. These methods can be broadly separated according to whether they explicitly encode support or applicability-domain information. \textcolor{black}{Uncertainty-estimation methods without explicit anchor- or support-relative error modeling include surrogates that integrate Bayesian neural networks} \citep{Magnani2025, Lin2023, Garg2023, Zou2024}, ensemble-type approaches \citep{Yang2022, Garg2023a, Mouli2024}, generative models \citep{Bulte2025, Prakash2026}, and conformal prediction \citep{Ma2024, Moya2025}.  Applicability-domain-aware methods include Gaussian-process and kernel-based surrogates \citep{Kumar2024, Kumar2025, Bonneville2024b, Mora2025}. These methods are naturally suited to deployment settings involving distribution shift because they can flag queries that are not well represented by the available data. However, their reliability depends on whether the chosen geometry captures the directions along which surrogate error actually changes. 

\paragraph{Positioning of GeoQ}
GeoQ is designed for \textcolor{black}{applicability-domain-aware, query-specific error estimation} for trained scientific surrogate models. A qualitative comparison \textcolor{black}{of GeoQ with} these other methods for uncertainty quantification in scientific surrogates is presented in Table \ref{tab:MethodComparisonTheoretical}. It preserves the key advantage of geometry-aware methods through an explicit notion of similarity to calibration data, while avoiding the cost of full Gaussian process covariance inference. Additionally, GeoQ is non-intrusive \textcolor{black}{with respect to the deployed surrogate architecture and training objective}, while also allowing for error estimation in pre-trained scientific surrogate models.

% These different approaches for estimating uncertainty in neural surrogates are compared in Table \ref{tab:MethodComparison_Theoretical}. This comparison considers OOD-risk identification as an important attribute that separates applicability domain/distance aware methods with those that do not include such a metric. Furthermore, we also compare the intrusive nature of each approach in terms of application to arbitrary neural surrogate or operators. Intrusive methods are neural surrogate architecture dependent and needs to be redesigned for each surrogate, limiting their applicability in this world of rapidly changing state-of-the-art surrogate method. Non-intrusive methods do not involve any such changes and are more widely applicable. Training and inference scalability identifies the computational expense need to deploy the uncertainty estimation method for more large or larger degrees of freedom problems. As observed for this tabular comparison, our proposed approach preserves the key advantage of GP-based methods, that is an explicit notion of similarity to training data, while avoiding the cost of full GP covariance inference. It also allows the learned error model to capture non-monotonic and direction-dependent relationships between distributional distance and prediction error.

\section{Method}\label{sec:method}

\subsection{Problem formulation}

We consider a dataset $\mathcal{D}=\{(x_i,y_i)\}_{i=1}^n$, where
$x_i\in\mathcal{X}$ denotes the input and $y_i\in\mathscr{Y}$ denotes the
corresponding output. The input space $(\mathcal{X},d_{\mathcal{X}})$ is a
metric space, and the ambient output space $\mathscr{Y}$ may be finite- or
infinite-dimensional. Thus, the inputs and outputs may be vectors, functions,
or other objects equipped with suitable notions of distance.

We assume that the data are generated by an underlying deterministic map
$F:\mathcal{X}\to\mathscr{Y}$ such that
$y_i=F(x_i)$.\footnote{More generally, one may consider stochastic responses
of the form $F(x,\omega)$, where $\omega$ denotes hidden or random variables.
The deterministic setting is assumed throughout this work.}

The range of $F$ may form a structured subset of $\mathscr{Y}$. Let
$\widehat F:\mathcal{X}\to\mathscr{Y}$ denote a surrogate approximation of
$F$. Although $\widehat F$ takes values in the same ambient output space, it
need not preserve the structure induced by $F$ exactly.

%\textcolor{blue}{Let $\mathscr{E}$ be either $\mathbb{R}^m$ or a linear space
%of $\mathbb{R}^m$-valued functions. We equip $\mathscr{E}$ with the partial
%order induced by the positive cone $\mathscr{E}_+$. For vectors
%$a,b\in\mathbb{R}^m$, we %write $a\preceq b$ if %$a_\ell\le b_\ell$ for each
%component $\ell=1,\ldots,m$. For function-valued errors, $a\preceq b$ means
%$a(s)\preceq b(s)$ pointwise, or almost everywhere when the functions are
%defined only up to equivalence. The positive part $[a]_+$ is defined using
%this same componentwise or pointwise order.}

For a prescribed error functional
$\rho:\mathscr{Y}\times\mathscr{Y}\to\mathscr{E}_+$, the prediction error at
$x$ is defined as
\begin{equation}
E(x)=\rho\!\left(F(x),\widehat F(x)\right)\in\mathscr{E}_+,
\qquad x\in\mathcal{X}.
\label{eq:pointwise_error}
\end{equation}
Depending on the choice of $\rho$, $E(x)$ may be a scalar error, a vector of
componentwise errors, or a scalar- or vector-valued error field. For example,
$E(x)=|F(x)-\widehat F(x)|$ defines an absolute error field when the outputs
are functions, while a norm or another summary metric may be used to produce
a scalar error. Throughout the following discussion, inequalities, positive
parts, and other algebraic operations involving $E(x)$ are interpreted
componentwise for vector errors and pointwise for error fields.

% \textcolor{blue}{Throughout the following discussion, inequalities, positive parts, conditional quantiles, and coverage statements involving non-scalar errors are interpreted with respect to this componentwise or pointwise partial order, unless explicitly stated otherwise. Thus, componentwise or pointwise upper quantiles imply marginal error estimates, not simultaneous coverage of an entire vector or field.}

The error $E(x)$ can be evaluated only at inputs for which the reference
response $F(x)$ is available. In particular, the errors available from the
dataset are $E(x_i)=\rho\!\left(y_i,\widehat F(x_i)\right)$ for
$i=1,\ldots,n$. Our objective is to use these observed errors to estimate
$E(x)$ at previously unseen query inputs, including inputs near or outside the
support of the available data.

\textcolor{black}{The method assumes that the calibration data and the test
evaluation are generated by the same reference map or data source. It does
not require the calibration and test queries to be identically distributed;
however, the usefulness of the learned correction depends on whether the test
query geometry is represented in the cross-fitted calibration features and
detected as supported by the safe-region score.}

\subsection{Regularity motivation}

The proposed estimator is motivated by local regularity of the reference and
surrogate maps. We first state the argument in the scalar metric case. This
corresponds to the special case of the error formulation in
Eq.~\eqref{eq:pointwise_error} with
$\mathscr E=\mathbb R$, $\mathscr E_+=\mathbb R_+$, and
$\rho(y,y')=d_{\mathscr Y}(y,y')$.

{\color{black}
\begin{prop}[Anchor-averaged regularity bound]
Let $(\mathcal X,d_{\mathcal X})$ be a metric input space and
$(\mathscr Y,d_{\mathscr Y})$ be a metric output space. Let
$F,\widehat F:\mathcal R\subseteq\mathcal X\to\mathscr Y$ satisfy, for all
$x,x_0\in\mathcal R$,
\[
d_{\mathscr Y}(F(x),F(x_0))\le L_F d_{\mathcal X}(x,x_0),
\qquad
d_{\mathscr Y}(\widehat F(x),\widehat F(x_0))
\le L_{\widehat F}d_{\mathcal X}(x,x_0),
\]
where $L_F,L_{\widehat F}\ge 0$. Define the scalar surrogate error
\[
E(x)=d_{\mathscr Y}(F(x),\widehat F(x)).
\]
Then, for any $x,x_0\in\mathcal R$,
\begin{equation}
E(x)\le E(x_0)+(L_F+L_{\widehat F})d_{\mathcal X}(x,x_0).
\label{eq:regularity_bound}
\end{equation}
Moreover, for any finite anchor set
$\mathcal A(x)\subseteq\mathcal R$,
\begin{equation}
E(x)
\le
\frac{1}{|\mathcal A(x)|}
\sum_{x_a\in\mathcal A(x)} E(x_a)
+
\frac{L_F+L_{\widehat F}}{|\mathcal A(x)|}
\sum_{x_a\in\mathcal A(x)}
d_{\mathcal X}(x,x_a).
\label{eq:anchor_regularity_bound}
\end{equation}
\end{prop}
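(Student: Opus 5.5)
The argument is a direct application of the triangle inequality in $(\mathscr Y,d_{\mathscr Y})$, followed by averaging. For the pointwise bound \eqref{eq:regularity_bound}, fix $x,x_0\in\mathcal R$. I will insert the two intermediate points $F(x_0)$ and $\widehat F(x_0)$ between $F(x)$ and $\widehat F(x)$. Applying the triangle inequality twice gives
\[
E(x)=d_{\mathscr Y}(F(x),\widehat F(x))
\le d_{\mathscr Y}(F(x),F(x_0))+d_{\mathscr Y}(F(x_0),\widehat F(x_0))+d_{\mathscr Y}(\widehat F(x_0),\widehat F(x)).
\]
The middle term is exactly $E(x_0)$. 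The first term is at most $L_F\,d_{\mathcal X}(x,x_0)$ by the Lipschitz hypothesis on $F$. The third term, after using the symmetry of $d_{\mathscr Y}$, is at most $L_{\widehat F}\,d_{\mathcal X}(x,x_0)$. Summing these three bounds yields \eqref{eq:regularity_bound}.

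For the anchor-averaged bound \eqref{eq:anchor_regularity_bound}, I assume $\mathcal A(x)$ is nonempty, so that the averages are defined. Since every $x_a\in\mathcal A(x)$ lies in $\mathcal R$, I can apply \eqref{eq:regularity_bound} with $x_0=x_a$ for each anchor. This gives the family of inequalities
\[
E(x)\le E(x_a)+(L_F+L_{\widehat F})\,d_{\mathcal X}(x,x_a),\qquad x_a\in\mathcal A(x).
\]
Summing over $x_a\in\mathcal A(x)$, dividing by $|\mathcal A(x)|$, and noting that the left-hand side does not depend on $a$ gives \eqref{eq:anchor_regularity_bound}. Equivalently, the left-hand side is bounded by a minimum over anchors, and a minimum is always at most the average.

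There is no substantive obstacle. The only points to check are the order of the intermediate points in the triangle inequality, the use of symmetry of the metric, and the requirement that all anchors lie in the region $\mathcal R$ where the Lipschitz conditions hold, which the statement assumes.
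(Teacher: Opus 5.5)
Your proposal is correct and follows essentially the same route as the paper: a double triangle inequality through $F(x_0)$ and $\widehat F(x_0)$, the two Lipschitz hypotheses, and then averaging the pointwise bound with $x_0=x_a$ over the anchors. Your explicit assumption that $\mathcal A(x)$ is nonempty makes precise a condition the paper leaves implicit.
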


\begin{proof}
For any $x,x_0\in\mathcal R$, the triangle inequality in
$(\mathscr Y,d_{\mathscr Y})$ gives
\[
\begin{aligned}
E(x)
&=d_{\mathscr Y}(F(x),\widehat F(x))\\
&\le
d_{\mathscr Y}(F(x),F(x_0))
+d_{\mathscr Y}(F(x_0),\widehat F(x_0))
+d_{\mathscr Y}(\widehat F(x_0),\widehat F(x))\\
&\le
E(x_0)+(L_F+L_{\widehat F})d_{\mathcal X}(x,x_0).
\end{aligned}
\]
Applying this inequality with $x_0=x_a$ for each
$x_a\in\mathcal A(x)$ and averaging over the anchor set gives the stated
anchor-averaged bound in Eq.~\eqref{eq:anchor_regularity_bound}.
\end{proof}
}

For non-scalar errors, an analogous componentwise or pointwise bound applies when the corresponding componentwise or pointwise error functional satisfies the triangle inequality and the reference and surrogate maps satisfy the corresponding componentwise or pointwise regularity estimates. \textcolor{black}{In that setting, $L_F$ and $L_{\widehat F}$ should be interpreted as componentwise or pointwise regularity factors, not scalar Lipschitz constants.}

A similar argument applies under H\"older continuity by replacing
$d_{\mathcal X}(x,x_a)$ with $d_{\mathcal X}(x,x_a)^\alpha$ for
$\alpha\in(0,1]$. In practice, the regularity factors are generally unknown,
and bounds based on worst-case values can be overly conservative.

\textcolor{black}{This proposition is used only as a structural motivation for
GeoQ. The learned GeoQ correction is not assumed to be a direct estimate of
the input-space Lipschitz term in Eq.~\eqref{eq:anchor_regularity_bound}.
Instead, GeoQ retains the same structural form as
Eq.~\eqref{eq:anchor_regularity_bound} and learns the correction empirically
from calibration data.}

\subsection{Anchor-based error decomposition}

The regularity bound in Eq.~\eqref{eq:anchor_regularity_bound} motivates an
estimator consisting of an error baseline computed from a suitable set of
anchors $\mathcal{A}(x)$ and a nonnegative correction. GeoQ specifies the
anchor set using a chosen notion of geometric nearness and learns the
correction from calibration data.

The usefulness of nearby reference errors depends on the geometry used to
identify nearby inputs. The simplest choice is distance in the original input
space when $\mathcal{X}$ is equipped with a meaningful norm. For
high-dimensional inputs, however, this distance may not reflect the directions
along which the state or its prediction error varies. We therefore introduce a
representation map $\phi:\mathcal{X}\to\mathscr{Z}$, where $\mathscr{Z}$ is a
normed representation space. The map $\phi$ may be the identity, a learned
feature map, or another representation chosen to encode relevant structure,
such as sensitivity or low-dimensional variation.  \textcolor{black}{The effectiveness of the anchor baseline and the learned correction depends on whether $\phi$ captures geometry that is relevant to
surrogate error.}

For an input set $\mathcal{I}\subseteq\mathcal{X}$, a representation map
$f:\mathcal{X}\to\mathscr{Z}_f$, and a positive integer $k$, define
\[
\mathcal{N}_{f,k}^{\mathcal{I}}(x)
:=
\operatorname{kNN}^{(k)}_{x_j\in\mathcal{I}}
\left\|f(x)-f(x_j)\right\|_{\mathscr{Z}_f}.
\]
\textcolor{black}{as set of $k$ nearest
points to $x$ in the representation induced by $f$. The ties in the definition of $\mathcal{N}_{f,k}^{\mathcal{I}}(x)$ are broken deterministically.}

Let
$\mathcal{X}_{\mathcal{D}}=\{x_i:(x_i,y_i)\in\mathcal{D}\}$
denote the available inputs, and let $k_a$ be the number of anchors. The
anchor-averaged error at a query $x$ is
\begin{equation}
\overline E_{k_a}(x)
=
\frac{1}{k_a}
\sum_{x_j\in
\mathcal{N}_{\phi,k_a}^{\mathcal{X}_{\mathcal{D}}}(x)}
E(x_j).
\label{eq:anchor_average}
\end{equation}
The parameter $k_a$ controls the neighborhood over which the observed anchor
errors are averaged.

Using this anchor baseline, GeoQ estimates the query error as
\begin{equation}
\widehat E(x)
=
\overline E_{k_a}(x)
+
\widehat R_{k_a}(x),
\label{eq:anchor_error_decomposition}
\end{equation}
where $\widehat R_{k_a}(x)\in\mathscr{E}_+$ is a learned nonnegative
correction. The corresponding calibration target is
\begin{equation}
R_{k_a}(x)
=
\left[E(x)-\overline E_{k_a}(x)\right]_+.
\label{eq:anchor_increment}
\end{equation}
The target $R_{k_a}(x)$ can be evaluated only when the reference response at
$x$ is available.

In Eq.~\eqref{eq:anchor_regularity_bound}, the correction depends on the
query--anchor distances and an unknown worst-case regularity factor. GeoQ
replaces this prescribed correction with a data-driven model whose inputs
describe the geometry of the query relative to the available data. We % approximate 
write the nonnegative correction as
\[
\widehat R_{k_a}(x)
\approx
g_\eta\!\left(\psi_{k_a,k}(x)\right),
\qquad
g_\eta:\mathscr{P}\to\mathscr{E}_+,
\]
where $\psi_{k_a,k}(x)\in\mathscr{P}$ is a geometric feature vector that may
include representation-space displacements, distances, and local-support
information. The feature map $\psi_{k_a,k}$ and its feature space
$\mathscr{P}$ are defined in the following subsection.

Figure~\ref{cartoon_anchor} illustrates this decomposition. The
anchor-averaged error provides a local baseline from observed errors, and the
learned correction estimates the remaining geometry-dependent error
increment.

% A deterministic bound that holds for all queries would generally require
% worst-case regularity information and may be overly conservative. GeoQ instead
% seeks a probabilistic upper estimate by modeling an upper conditional quantile
% of the correction term. For a prescribed quantile level $\tau\in(0,1)$, the
% correction model is trained so that
% \begin{equation}
% g_\eta(\psi)
% \approx
% Q_\tau\!\left(
% R_{k_a}(x)
% \,\middle|\,
% \psi(x,\mathcal{A}_{k_a}(x))=\psi
% \right).
% \label{eq:anchor_quantile_target}
% \end{equation}
% The resulting estimator is intended to satisfy
% $\mathbb{P}(E(x)\leq\widehat E(x))\approx\tau$ within regions supported by
% the calibration data, as measured by the support score introduced in
% Section~\ref{subsec:safe-region}. For non-scalar errors, this is a marginal
% componentwise or pointwise coverage statement and does not imply simultaneous
% coverage of the entire vector or field.

\begin{figure}[H]
\centering

\begin{minipage}[c]{0.54\textwidth}
    \centering
    \subfloat[Anchor neighborhood in representation space]{
        \includegraphics[width=\linewidth]{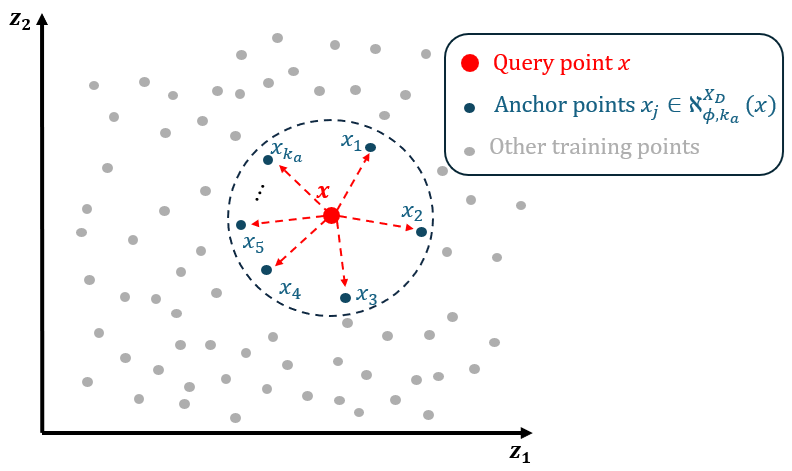}
    }
\end{minipage}
\hfill
\begin{minipage}[c]{0.42\textwidth}
    \centering
    \subfloat[Anchor-averaged error baseline]{
        \includegraphics[width=\linewidth]{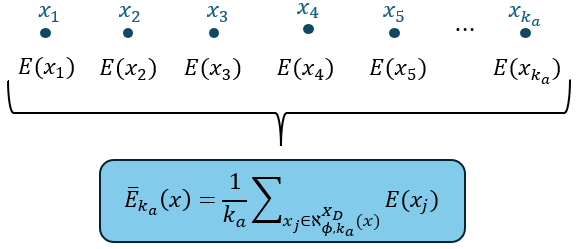}
    }

    \vspace{1.0em}

    \subfloat[Error decomposition at the query point]{
        \includegraphics[width=\linewidth]{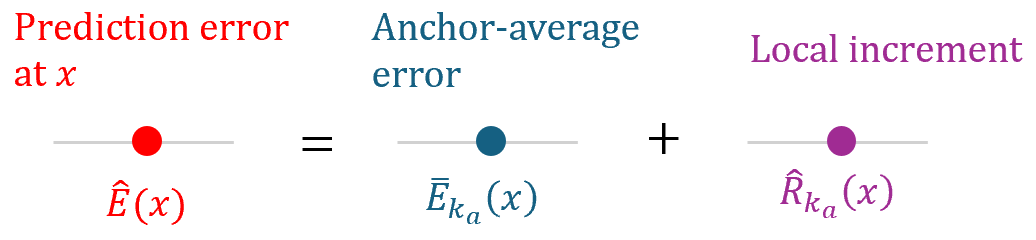}
    }
\end{minipage}

\caption{Schematic of the anchor-based GeoQ error model. A query point is compared with its local anchors in the representation space induced by $\phi$. The anchor errors define a local baseline $\overline{E}_{k_a}(x)$, and GeoQ learns a nonnegative correction $\widehat{R}_{k_a}(x)$ to estimate the query error.}
\label{cartoon_anchor}
\end{figure}

\subsection{Geometric feature construction}

To learn a correction that can transfer beyond the specific calibration points, GeoQ represents each query through its geometry relative to nearby anchors. We choose to construct features from anchor-relative displacement and local support distance in a chosen embedding space.
The anchor-relative displacement feature is defined as the average displacement from the local anchor set to the query
\begin{align}
\Delta z_{k_a}(x)
=
\frac{1}{k_a}
\sum_{x_j\in\mathcal{N}_{\phi,k_a}^{\mathcal{X}_\mathcal{D}}(x)}
\left(\phi(x)-\phi(x_j)\right)
\in\mathscr{Z}.
\label{eq:dz}
\end{align}
To measure local input-space support, we define a normalization map
$\nu:\mathcal{X}\to\mathscr{Z}_{\nu}$, where $\mathscr{Z}_{\nu}$ is a normed space.
The map $\nu$ applies a chosen normalization based on the training data, such as
coordinatewise standardization for finite-dimensional inputs. Furthermore,
$\mathcal{N}_{\nu,k}^{\mathcal{X}_\mathcal{D}}(x)$ denotes the $k$ nearest
neighbors to $x$ in the representation induced by $\nu$. The normalized
input-space support feature is then defined as
\begin{align}
\Delta d_k(x)
=
\frac{1}{k}
\sum_{x_j\in\mathcal{N}_{\nu,k}^{\mathcal{X}_\mathcal{D}}(x)}
\left\|\nu(x)-\nu(x_j)\right\|_{\mathscr{Z}_{\nu}}.
\label{eq:dd}
\end{align}
This quantity measures the average distance from the query to its $k$-nearest neighbors in normalized input space.

For each query point, the geometric feature vector of GeoQ is defined as
\begin{align}
\psi_{k_a,k}(x)
=
\left(\Delta z_{k_a}(x),\Delta d_k(x)\right)
\in\mathscr{P},
\qquad
\mathscr{P}:=\mathscr{Z}\times\mathbb{R}.
\label{eq:psi}
\end{align}
where $k_a$ controls the number of anchors used to construct the anchor-averaged error baseline, while $k$ controls the scale of the local support-density descriptor. This formulation expresses both components of $\psi_{k_a,k}$ as nearest-neighbor quantities induced by specified embeddings, making the feature construction systematic and explicitly tied to the geometry of the calibration data. \textcolor{black}{For finite-dimensional implementations, $\psi_{k_a,k}(x)$
is vectorized and equipped with the Euclidean norm after the prescribed
feature scaling.} 
% {In the numerical examples, this corresponds to the standard Euclidean norm on the concatenated feature vector $(\Delta z_{k_a}(x),\Delta d_k(x))$.}

Since the nearest-neighbor sets depend on the query location, the resulting displacement features are generally not globally smooth. The parameters $k_a$ and $k$ control the geometric scale at which the data are summarized: smaller values emphasize localized anchor-relative structure, whereas larger values yield smoother, increasingly global summaries of the calibration data.

Additional features may be incorporated to capture local sensitivity of the surrogate, such as gradient-based descriptors derived from $\nabla\widehat F(x)$ or related quantities. While such features can, in principle, improve the characterization of error growth, their practical utility may be limited by the reliability of surrogate gradients, particularly in extrapolative regions, as well as by increased feature sparsity in high-dimensional settings. Accordingly, we restrict attention to the two geometry-based features above, although the framework readily accommodates richer feature representations.

\subsection{Conditional quantile increment model}

The regularity argument motivates an upper correction to the anchor-averaged
error, but a deterministic bound would require worst-case regularity factors
that are generally unavailable. We therefore estimate an upper conditional
quantile of the anchor-relative error increment.

We seek a calibrated upper estimate of $R_{k_a}(x)$ by learning a nonnegative
function $g_\eta:\mathscr{P}\to\mathscr{E}_+$ that predicts a high quantile of
$R_{k_a}(x)$ from $\psi_{k_a,k}(x)$. For a target quantile level
$\tau\in(0,1)$, the function $g_\eta(\psi)$ is trained to approximate the
corresponding conditional quantile of the error increment:
\begin{align}
g_\eta(\psi)
\approx
Q_\tau\left(
R_{k_a}(x)
\mid
\psi_{k_a,k}(x)=\psi
\right).
\end{align}
Here, $Q_\tau(\cdot)$ denotes the conditional $\tau$-quantile operator. For
non-scalar errors, the conditional quantile is evaluated componentwise or
pointwise. Thus, within calibration-supported regions of feature space,
$g_\eta(\psi)$ is intended to satisfy
$\mathbb{P}(R_{k_a}(x)\leq g_\eta(\psi)\mid
\psi_{k_a,k}(x)=\psi)\approx\tau$.  \textcolor{black}{The quality of this approximation depends on the
representativeness of the cross-fitted calibration tuples, the expressiveness
of $g_\eta$, and the extent to which $\psi_{k_a,k}$ captures the
error-relevant geometry.} 
% {Thus, the learned conditional quantile should be interpreted as an empirical calibration model rather than as a distribution-free coverage guarantee.}

To preserve the anchor-based construction, we parameterize the increment
model as
\begin{align}
g_\eta(\psi)
=
\|\psi\|_{\mathscr{P}}\,h_\eta(T(\psi)),
\end{align}
where $T$ denotes a fixed feature-standardization map that is fit on the
calibration feature set and kept fixed during training and inference, and
$h_\eta(T(\psi))\in\mathscr{E}_+$ is a nonnegative learned model. This
construction guarantees $g_\eta(\psi)\in\mathscr{E}_+$ and $g_\eta(0)=0$, so
the predicted increment vanishes when the geometric feature vector vanishes.
The factor $\|\psi\|_{\mathscr{P}}$ introduces an explicit dependence on the
magnitude of the geometric separation represented by $\psi$.

\subsection{Cross-fitted calibration data}

Since the quantities $E(x)$ and $R_{k_a}(x)$ are not observable for unseen testing points, we estimate their behavior using a cross-fitting procedure. The training dataset is partitioned into $K_f$ folds
$\{\mathcal{D}^{(\ell)}\}_{\ell=1}^{K_f}$ where $\mathcal{D}=\bigcup_{\ell=1}^{K_f}\mathcal{D}^{(\ell)}$. For each fold $\ell$, we train an auxiliary surrogate $\widehat{F}^{(-\ell)}$ on $\mathcal{D}\setminus\mathcal{D}^{(\ell)}$ and evaluate it on the held-out fold $\mathcal{D}^{(\ell)}$ to obtain approximately out-of-sample errors. The auxiliary surrogates use the same architecture and training procedure as the final surrogate. The representation map $\phi$ used to construct GeoQ features is fixed and shared across all folds, for example by using the identity map, a pretrained encoder, or the encoder of the final surrogate held fixed during calibration. This ensures that signed displacement features are computed in a common coordinate system.

We denote $\mathcal{X}^{(-\ell)}_{\mathcal{D}}=\{x_i:(x_i,y_i)\in\mathcal{D}\setminus\mathcal{D}^{(\ell)}\}$ the set of fold-training inputs used for anchor selection when the $\ell$-th fold is held out. For each held-out pair $(x_i,y_i)\in\mathcal{D}^{(\ell)}$, the cross-fitted error $e_i^{\mathrm{cf}}$ is obtained from Eq.~\eqref{eq:pointwise_error} by replacing $\widehat F$ with $\widehat F^{(-\ell)}$ and using $y_i$ as the reference response. The corresponding anchor-averaged error $\overline e_{k_a,i}^{\mathrm{cf}}$ is obtained from Eq.~\eqref{eq:anchor_average} by restricting the anchor search to
$\mathcal{N}_{\phi,k_a}^{\mathcal{X}_{\mathcal{D}}^{(-\ell)}}(x_i)$
and replacing $E(x_j)$ with
$\rho\!\left(y_j,\widehat F^{(-\ell)}(x_j)\right)$. The calibration increment is then
$r_i^{\mathrm{cf}}=\left[e_i^{\mathrm{cf}}-\overline e_{k_a,i}^{\mathrm{cf}}\right]_+$. \textcolor{black}{This construction makes the query residual
$e_i^{\mathrm{cf}}$ out-of-sample with respect to the auxiliary surrogate
$\widehat F^{(-\ell)}$, since $x_i$ belongs to the held-out fold
$\mathcal D^{(\ell)}$. In contrast, the anchor residuals entering
$\overline e_{k_a,i}^{\mathrm{cf}}$ are evaluated on fold-training points
$x_j\in\mathcal X_{\mathcal D}^{(-\ell)}$ and are therefore in-sample for
$\widehat F^{(-\ell)}$. This is intentional: at deployment, the query error is
unknown, whereas the anchor errors are observed on the available anchor or
calibration set.}

The cross-fitted geometric features are obtained from
Eqs.~\eqref{eq:dz}--\eqref{eq:psi} by replacing
$\mathcal{X}_{\mathcal{D}}$ with the fold-training input set
$\mathcal{X}_{\mathcal{D}}^{(-\ell)}$ in the nearest-neighbor searches. The
representation map $\phi$ and normalization map $\nu$ are fixed and shared
across all folds. 
Thus, $\Delta z_{k_a}^{\mathrm{cf}}(x_i)$ and
$\Delta d_k^{\mathrm{cf}}(x_i)$ select neighbors only from
$\mathcal{X}_{\mathcal{D}}^{(-\ell)}$, while the normalized distances are
computed in a common coordinate system across folds. Together these define
$\psi_i=\psi_{k_a,k}^{(-\ell)}(x_i)
=\left(\Delta z_{k_a}^{\mathrm{cf}}(x_i),
\Delta d_k^{\mathrm{cf}}(x_i)\right)\in\mathscr{P}$, which yields the
calibration dataset
$\mathcal{C}_{\mathrm{GeoQ}}
=\{(\psi_i,r_i^{\mathrm{cf}})\}_{i=1}^n$.

Cross-fitting mitigates training-set optimism and provides approximately out-of-sample quantities for learning error growth. The folds may be constructed via random partitioning, clustering in principal component or latent feature space, or problem-specific strategies such as temporal or regime-based splits. The appropriate fold construction depends on the intended deployment regime. For interpolative settings, random folds are often suitable because they produce calibration pairs representative of near-support predictions with relatively small query-anchor displacements. For extrapolative or distribution-shifted settings, random folds may be insufficient because they primarily generate small-displacement calibration pairs and may provide limited variation in the increment target $r_i^{\mathrm{cf}}$. In such cases, cluster-based, temporal-based, or regime-based folds can expose the calibration procedure to a broader range of query-anchor separations, including small, intermediate, and large displacements. The choice of $K_f$ also induces a tradeoff: smaller $K_f$ can increase geometric diversity in the held-out samples but may reduce fold-surrogate accuracy, whereas larger $K_f$ improves fold-surrogate fidelity at the cost of reduced coverage in displacement space.

After constructing $\mathcal{C}_{\mathrm{GeoQ}}$, for $\mathscr{E}=\mathbb{R}^m$, the increment model is trained using the following componentwise pinball loss:
\begin{align}
\mathcal{L}(\eta) =\dfrac{1}{nm} \sum_{i=1}^n\sum_{j=1}^m \max\left(\tau\left(r_{i,j}^{\mathrm{cf}}-g_{\eta,j}(\psi_i)\right), (\tau-1)\left(r_{i,j}^{\mathrm{cf}} - g_{\eta,j}(\psi_i)\right) \right).
\end{align}
For function-valued errors, the same loss is applied pointwise over the discretized output domain and averaged. The pinball loss penalizes underestimation by factor $\tau$ and overestimation by factor $1-\tau$. When $\tau$ is high, the loss encourages the learned correction to approximate an upper conditional quantile of the anchor-relative error increment. The final error estimator is then defined as
\begin{align}
\widehat E(x)=\overline E_{k_a}(x)+g_\eta(\psi_{k_a,k}(x)).
\end{align}

\begin{figure}[t]
  \centering
  \subfloat[Fold construction in feature space]{
  \adjustbox{valign=c}{%
    \includegraphics[width=0.33\textwidth]{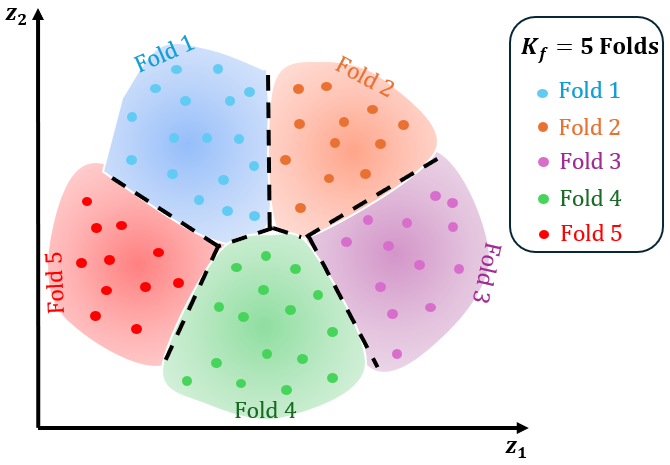}
  }
}
\subfloat[Held-out queries and training local anchors]{
  \adjustbox{valign=c}{%
    \includegraphics[width=0.33\textwidth]{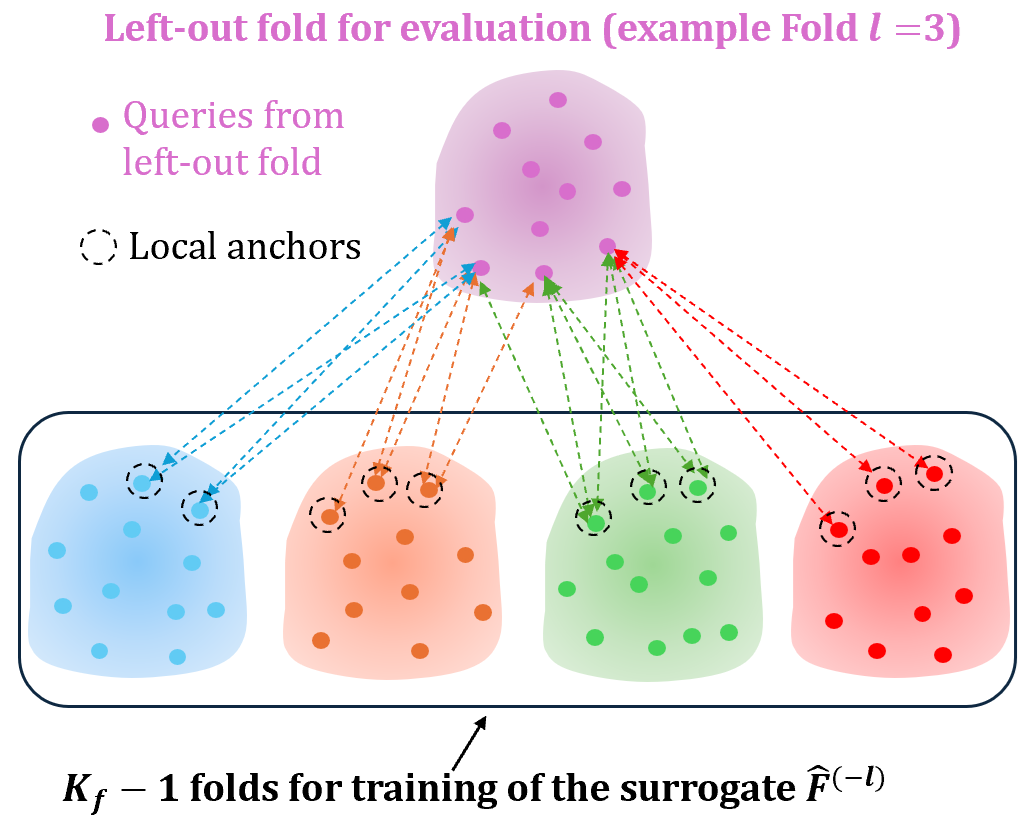}
  }
}
\subfloat[Calibration set construction]{
  \adjustbox{valign=c}{%
    \includegraphics[width=0.33\textwidth]{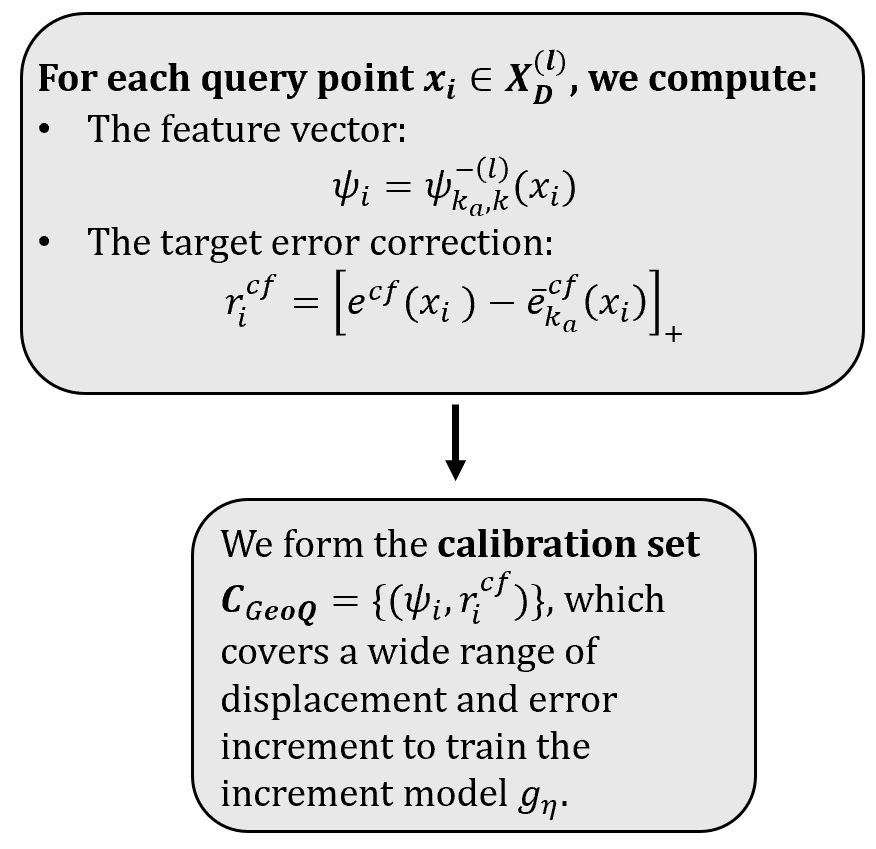}
  }
}
  \caption{Schematic of the GeoQ cross-fitted calibration procedure.}\label{cartooncrossfit}
\end{figure}

\subsection{Safe-region score}\label{subsec:safe-region}

To assess whether the estimated error is being applied in a
calibration-supported region, we define a density-based support score directly
in the calibration feature space. Let
$\Psi_{\rm cal}=\{\psi_i\}_{i=1}^n\subset\mathscr P$
denote the collection of cross-fitted calibration features.

\textcolor{black}{Let $T:\mathscr P\to\mathbb R^q$ denote the fixed
feature-standardization map fit on the calibration feature set
$\Psi_{\rm cal}$. This is the same standardization map used by the increment
model. We define the standardized feature-space distance}
{\color{black}
\[
d_T(\psi,\psi')
=
\left\|T(\psi)-T(\psi')\right\|_2 .
\]
}
% under the distance $d_T$.
\textcolor{black}{Let $K_{\mathrm{safe}}$ be the number of calibration-feature
neighbors used to compute the support score. For a query feature vector
$\psi\in\mathscr P$, let
$\mathcal N^{\Psi_{\rm cal}}_{T,K_{\mathrm{safe}}}(\psi)$ denote the set of
$K_{\mathrm{safe}}$ nearest calibration features to $\psi$ in the representation induced by $T$. The support score is then defined by}
\begin{align}
S(\psi)
&=
\frac{1}{K_{\mathrm{safe}}}
\sum_{\psi_j\in
\mathcal N^{\Psi_{\rm cal}}_{T,K_{\mathrm{safe}}}(\psi)}
d_T(\psi,\psi_j)
\label{eq:safe_score_standardized}
\end{align}

Using the standardized distance $d_T$ makes the safe-region criterion less sensitive
to the relative numerical scales of the displacement and support-distance
features and aligns the support score with the feature representation used by
the increment model $g_\eta$.
Small values of $S(\psi)$ indicate that the query geometry is well represented
by the calibration data, whereas large values correspond to sparse or
previously unseen feature configurations.

% We employ a feature-space kNN score rather than global covariance-based
% metrics such as Mahalanobis distance because the calibration feature
% distribution may be highly nonlinear, multimodal, or anisotropic.
% Covariance-based approaches implicitly assume an approximately elliptical
% feature distribution and may fail to characterize regions of valid support in
% such settings. In contrast, the kNN-based criterion adapts locally to the
% geometry and density of the calibration feature cloud without imposing global
% distributional assumptions.

\textcolor{black}{When support scores are computed for calibration features in
order to set the safe-region threshold, we use a leave-one-out version of the
score. For each calibration feature $\psi_i\in\Psi_{\rm cal}$}
\begin{align}
S_{\rm cal}(\psi_i)
=
\frac{1}{K_{\mathrm{safe}}}
\sum_{\psi_j\in
\mathcal N^{\Psi_{\rm cal}\setminus\{\psi_i\}}_{T,K_{\mathrm{safe}}}(\psi_i)}
d_T(\psi_i,\psi_j).
\label{eq:safe_score_calibration_loo}
\end{align}
\textcolor{black}{The exclusion of $\psi_i$ from its own neighbor set prevents
the calibration score from being artificially reduced by zero self-distance.
The safe threshold is then selected as an empirical high quantile of these
leave-one-out calibration scores, for example}
{\color{black}
\begin{align}
\tau_{\mathrm{safe}}
=
\widehat Q_{\alpha_{\mathrm{safe}}}
\left(
\{S_{\rm cal}(\psi_i)\}_{i=1}^n
\right),
\qquad
\alpha_{\mathrm{safe}}=0.95 ,
\label{eq:safe_threshold_loo}
\end{align}
}
\noindent where $\widehat Q_{\alpha_{\mathrm{safe}}}$ denotes the
empirical $\alpha_{\mathrm{safe}}$-quantile. At test time, $S(\psi)$ is
computed using the full calibration feature cloud $\Psi_{\rm cal}$, since the
test feature is not a member of the calibration set. The near-support, or safe, region is defined as
\begin{align}
\Omega_{\mathrm{safe}}
=
\{\psi\in\mathscr P:S(\psi)\le \tau_{\mathrm{safe}}\}.
\end{align}
This validity criterion distinguishes regions where the learned conditional
quantile model remains supported by calibration data from regions where the
geometric configuration becomes insufficiently represented. Coverage within the safe region still depends on the quality of the learned conditional quantile model and on the representativeness
of the cross-fitted calibration features.

\subsection{Algorithm summary and implementation choices}

The full GeoQ workflow, including calibration, error-estimator training, and test-time inference, is summarized in Algorithm \ref{alg:geoq}. \textcolor{black}{When a pretrained deployed surrogate is already available, GeoQ treats that surrogate as fixed and calibrates an error estimator around it. When no such surrogate is supplied, the same workflow can first train the deployed surrogate and then construct the GeoQ calibration model.} The main design choices are the representation map $\phi$, the anchor-neighborhood size $k_a$, the support-neighbor size $k$, and the fold construction. The representation map $\phi$ should reflect the notion of similarity most relevant to surrogate error. Learned encoder features can be useful when the data lie near a latent manifold, whereas the identity map $\phi(x)=x$ is appropriate when physical input-space proximity already provides a meaningful geometry. The anchor size $k_a$ determines the bias-variance tradeoff of the local error baseline. Smaller values preserve locality but can be sensitive to individual anchors, while larger values stabilize the baseline at the cost of possible oversmoothing. The support-neighbor size $k$ sets the scale at which local density is measured. The fold construction should also be chosen according to the intended deployment regime. Random folds are often appropriate for interpolative testing because they produce near-support calibration errors. Cluster-based or temporal-based folds are more suitable for extrapolative testing because they create pseudo-OOD calibration tuples with larger query-anchor displacements. The pinball loss is used because the goal is to estimate an upper error quantile rather than the conditional mean error. Unless otherwise stated, these quantities are fixed before test evaluation, and their sensitivity is examined for the KS equation in \ref{append:ablation}.

\begin{algorithm}[H]
\caption{GeoQ calibration and inference}
\label{alg:geoq}
\begin{algorithmic}[1]
\Require \textcolor{black}{Training data $\mathcal{D}=\{(x_i,y_i)\}_{i=1}^n$, either a trained deployed surrogate $\widehat F$ or a surrogate architecture to be trained, optional representation map $\phi$, quantile level $\tau$, anchor size $k_a$, support-neighbor size $k$, safe-score neighbor size $K_{\mathrm{safe}}$, and number of folds $K_f$.}

\vspace{.15cm}

\Statex \textbf{Specify deployed surrogate}
\State \textcolor{black}{If a trained deployed surrogate $\widehat F$ is supplied, keep it fixed. Otherwise, train the deployed surrogate $\widehat F$ on $\mathcal D$ using the specified surrogate architecture.}

\vspace{.15cm}

\Statex \textbf{Geometry and observed anchor errors}
\State Define $\mathcal X_{\mathcal D}=\{x_i:(x_i,y_i)\in\mathcal D\}$.
\State Use the supplied $\phi$; otherwise define $\phi$ from a fixed representation of $\widehat F$ or set $\phi=\operatorname{Id}_{\mathcal X}$.
\State Fit $\nu$ using $\mathcal X_{\mathcal D}$ and compute $E(x_i)=\rho(y_i,\widehat F(x_i))$ for all $(x_i,y_i)\in\mathcal D$.

\vspace{.15cm}

\Statex \textbf{Cross-fitted calibration data}
\State Partition $\mathcal D$ into folds $\{\mathcal D^{(\ell)}\}_{\ell=1}^{K_f}$.
\For{$\ell=1,\ldots,K_f$}
    \State Train $\widehat F^{(-\ell)}$ on $\mathcal D\setminus\mathcal D^{(\ell)}$ and define $\mathcal X_{\mathcal D}^{(-\ell)}=\{x_i:(x_i,y_i)\in\mathcal D\setminus\mathcal D^{(\ell)}\}$.
    \State For each $(x_i,y_i)\in\mathcal D^{(\ell)}$, compute $e_i^{\mathrm{cf}}$, $\overline e_{k_a,i}^{\mathrm{cf}}$, $\psi_i$, and $r_i^{\mathrm{cf}}=[e_i^{\mathrm{cf}}-\overline e_{k_a,i}^{\mathrm{cf}}]_+$ using $\mathcal X_{\mathcal D}^{(-\ell)}$.
\EndFor
\State Form $\mathcal C_{\mathrm{GeoQ}}=\{(\psi_i,r_i^{\mathrm{cf}})\}_{i=1}^n$ and $\Psi_{\mathrm{cal}}=\{\psi_i\}_{i=1}^n$.

\vspace{.15cm}

\Statex \textbf{Correction-model calibration}
\State Fit the feature-standardization map $T$ on $\Psi_{\mathrm{cal}}$ and keep $T$ fixed during training and inference.
\State Train $g_\eta$ on $\mathcal C_{\mathrm{GeoQ}}$ using the componentwise pinball loss at level $\tau$.

\vspace{.15cm}

\Statex \textbf{Inference}
\For{each test query $x$}
    \State Compute $\overline E_{k_a}(x)$ and $\psi_{k_a,k}(x)$ using $\mathcal X_{\mathcal D}$.
    \State Compute the standardized support score using Eq.~\eqref{eq:safe_score_standardized}.
    \State Return $\widehat E(x)=\overline E_{k_a}(x)+g_\eta(\psi_{k_a,k}(x))$ and $\operatorname{safe}(x)=\mathbf 1\{S(\psi_{k_a,k}(x))\leq\tau_{\mathrm{safe}}\}$.
\EndFor
\end{algorithmic}
\end{algorithm}

% The proposed approach estimates surrogate error by combining anchor-averaged training errors with a learned geometry-aware conditional quantile correction, while controlling validity through support-based scores. We refer to this framework as \textit{GeoQ} (\textit{Geo}metry-Aware Conditional \textit{Q}uantile Error Estimation), reflecting its use of representation-space geometry together with conditional quantile modeling for surrogate error estimation.

\section{Numerical results}\label{sec:numerical_results}
In this section, we assess the performance of the proposed error estimation framework (GeoQ) on four representative problems: the Forrester function, the Kuramoto–Sivashinsky equation, the WeatherBench dataset, and the Richtmyer–Meshkov instability. Detailed descriptions of the surrogate models (and their accuracy) and increment models, including network architectures, optimization procedures, and training configurations, are provided in \ref{append:model_architecture}.

In all considered problems, the pointwise prediction error is defined as the absolute difference between the reference solution and the surrogate prediction
\begin{align}
    E(x)=|F(x)-\hat{F}(x)|.
\end{align}

For all experiments, the calibration dataset is constructed through a leave-one-cluster-out procedure unless otherwise noted. Specifically, the full training set is partitioned into $K_f$ subsets using a $K$-means decomposition of the input space. For each calibration experiment, one cluster is excluded from surrogate training and treated as a held-out fold, while the remaining clusters are used for model training. This procedure produces controlled pseudo-out-of-distribution (pseudo-OOD) samples that emulate extrapolative predictions relative to the effective training manifold of the surrogate model.

Unless otherwise specified, we use the same default GeoQ configuration across all test cases. The anchor neighborhood size is fixed to $k_a=1$, corresponding to the single-nearest-anchor construction, and the local input-space support feature $\Delta d_k$ is computed using $k=20$ nearest neighbors. The GeoQ feature vector is defined as $\psi_{k_a,k}=\left[\Delta z_{k_a},\ \Delta d_k\right]$ where $\Delta z_{k_a}$ is the representation-space displacement between the query and its local anchor. The number of cross-fitting folds $K_f$ is problem-dependent and is specified separately for each benchmark. We provide ablation studies on the sensitivity of these design choices for the KS equation in \ref{append:ablation}.

To quantitatively evaluate the quality of the uncertainty estimates in Section~\ref{sec_quantitative}, we report several complementary metrics. Let $y_i$ denote the reference output, $\widehat{y}_i=\widehat{F}(x_i)$ denote the corresponding surrogate prediction, $E_i$ denote the true surrogate error, and $\widehat{E}_i$ the predicted error estimate. For field-valued outputs, the index $i$ may represent either a sample index or a flattened sample-coordinate index over all evaluated output locations. We consider the following metrics:
\begin{itemize}
    \item \textit{Pinball loss}: 
$\mathcal{L}_{\tau}
=\frac{1}{N}\sum_{i=1}^N\max\left(\tau(E_i-\widehat E_i),(\tau-1)(E_i-\widehat E_i)\right)$ where $\tau\in (0,1)$ is the target quantile level.
    \item \textit{Coverage}: $\frac{1}{N}\sum_{i=1}^N\mathbf{1}\left\{\left|y_i-\widehat y_i\right|\leq\widehat E_i\right\}$
    % $\frac{1}{N}\sum_{i=1}^N\mathbf 1\{E_i \le \widehat E_i\}$
\item \textit{Correlation}: $\mathrm{Corr}(E,\widehat E)
=\frac{\mathrm{Cov}(E,\widehat E)}{\sigma_E \sigma_{\widehat E}}$
\end{itemize}

The \textit{pinball loss} evaluates the quality of the estimated upper quantiles. A lower pinball loss indicates more accurate conditional quantile estimation. In this work, we choose $\tau=0.95$. The \textit{coverage} measures the fraction of samples for which the predicted error bound exceeds the true error. Higher coverage indicates more conservative uncertainty estimates. The \textit{correlation} metric evaluates the Pearson correlation coefficient between the predicted and true errors. This metric measures how well the estimated uncertainty tracks the relative spatial or temporal variations of the true error. We also report training time and inference time. Inference time denotes the total time required to produce both the surrogate prediction and the associated error estimate over the test set. Unless otherwise stated, all training and inference experiments are performed on a single NVIDIA GeForce RTX 2080 Ti GPU.

% \textcolor{blue}{Avi, could you add a sentence specifying the computational resources for your testing case?}

% For spatially distributed fields, we additionally report the \textit{structural similarity index} (SSIM), which measures structural agreement between the predicted and true error fields by comparing local luminance, contrast, and spatial structure. Higher SSIM values indicate improved preservation of coherent spatial error patterns.

\subsection{Forrester function}
We first evaluate the proposed GeoQ method on the one-dimensional Forrester function \citep{forrester2008engineering}
\begin{align}
u(x)= (6x-1)^2\sin(12x-4), \quad \text{for } x\in [0,1],
\end{align}
which is a standard benchmark for nonlinear regression. Despite its low dimensionality, the function exhibits strong nonlinearity and rapidly varying curvature, making it suitable for testing error estimation methods. The surrogate model $\hat{F}$ is a neural network trained on data restricted to the interval $\mathcal{X}_{train}=[0.25,0.75]$ using $1200$ uniformly spaced training points. Evaluation is performed on $4000$ uniformly spaced test points over the full domain $\mathcal{X}_{test}=[0,1]$. Thus, $[0.25,0.75]$ is treated as \textcolor{black}{the} in-distribution region, while $[0,0.25)\cup(0.75,1]$ is used to assess extrapolative behavior. 

The surrogate model $\widehat{F}$ is a fully connected neural network consisting of an encoder and a prediction head. The encoder maps the input $x\in\mathbb{R}$ to a latent feature vector $\phi(x)\in\mathbb{R}^p$, where $p=16$ in the present experiment. The prediction head then maps $\phi(x)$ to the surrogate prediction $\widehat F(x)$.

The latent representation $\phi(x)$ from the final trained surrogate is held fixed and used to define the geometry underlying the anchor-based error estimator. In particular, anchor neighborhoods and displacement features are computed in this shared latent space. The increment model $g_\eta:\mathbb{R}^{p+1}\to\mathbb{R}_+$ is a fully connected neural network that takes the latent displacement feature together with the support-distance feature as input. The model is trained using the pinball loss at quantile level $\tau=0.95$.

To generate calibration data, we construct a sequence of $K=10$ nested training subsets inside the full training interval $[0.25,0.75]$. The nested subsets progressively shrink toward the center of the training interval. For each subset, a stage surrogate is trained on the restricted support and evaluated on points outside that support but still within the full interval. This procedure creates controlled pseudo-OOD examples that mimic extrapolation relative to the stage-training domain.

\begin{figure}[H]
  \centering
{\includegraphics[width=0.7\textwidth]{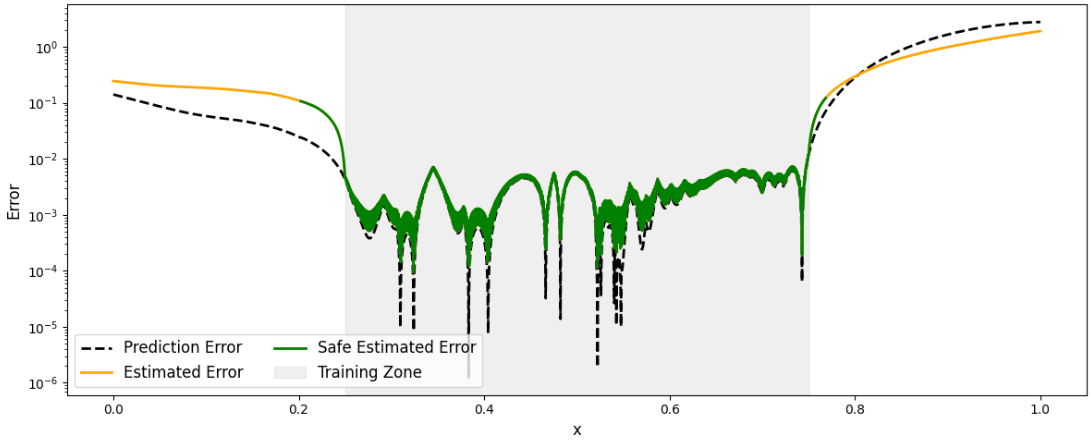}}
  \caption{\textit{Forrester equation:} Visualization of the surrogate prediction error and the estimated error by the proposed GeoQ approach.}\label{forrester_visu}
\end{figure}

Figure \ref{forrester_visu} compares the true surrogate prediction error with the error estimates produced by GeoQ. It can be seen that inside the training region, the surrogate error remains relatively small, and the proposed estimator captures the local error variations while maintaining stable behavior across oscillatory regions. Near the boundaries of the training support, the true error increases rapidly as the surrogate enters extrapolative regimes. The estimated error follows this growth trend, indicating that the learned geometric features successfully capture the increasing uncertainty away from the training support. The safe-region estimate corresponds to predictions restricted to samples identified as reliable by the feature-space kNN criterion. Within this region, the predicted error remains close to the true error envelope while avoiding excessive conservativeness. These results demonstrate that the proposed GeoQ model provides accurate local calibration within the learned safe-region and produces reasonable uncertainty growth under extrapolation.

% The representation map $\phi$ is realized using two hidden layers with $64$ ReLU units each, followed by a linear latent layer. The surrogate prediction is then obtained through a regression head composed of one hidden layer with $32$ ReLU units and a final linear output layer:
% $$
% x \xrightarrow{\phi} z\in\mathbb{R}^{p}
% \xrightarrow{h} \hat{u}(x)
% $$
% The surrogate is trained using mean-squared error loss with the Adam optimizer, learning rate $10^{-3}$, batch size $32$, and $250$ training epochs.

\subsection{Kuramoto–Sivashinsky equation}
We assess the proposed GeoQ method on the one-dimensional Kuramoto–Sivashinsky (KS) equation \citep{sivashinsky1977nonlinear,kuramoto1978diffusion}
\begin{align}
\dfrac{\partial u}{\partial t} + u\dfrac{\partial u}{\partial x} + \dfrac{\partial^2 u}{\partial x^2} +\dfrac{\partial^4 u}{\partial x^4}=0 \quad \text{for } x\in[0,L], t\ge 0.
\end{align}
The KS equation exhibits nonlinear spatiotemporal chaos and rapid error growth
under autoregressive rollout, with trajectory separation eventually saturating
at a scale set by the diameter of its bounded chaotic attractor, making it a
standard benchmark for data-driven forecasting of dynamical systems~\cite{pathak2018model, loya2025structurepreservingneuralordinarydifferential, LINOT2023111838}.
The dataset consists of $4600$ temporal snapshots of the KS solution, with each state represented on a grid of $64$ spatial points. Let $u_t\in\mathbb R^{64}$ denote the state at time index $t$. The surrogate is trained to approximate the one-step evolution operator so that the learned model produces the one-step prediction
\begin{align}
\hat{u}_{t+1}=\hat{F}(u_t).
\end{align}
The first $4000$ snapshots are used as the training region, while the remaining snapshots are reserved for testing. Long-horizon predictions are generated autoregressively. Given an initial condition $u_t$, we set $\hat{u}_{t,0}=u_t$ and recursively compute the long-horizon predictions
\begin{align}
\hat{u}_{t,n+1}=\hat{F}(\hat{u}_{t,n}),\qquad n=0,\ldots,n_{\rm rollout}-1.
\end{align}
We denote the corresponding true rollout as $ u_{t,n}=F^n(u_t)=u_{t+n}$. At rollout horizon $n$, the componentwise prediction error is
\begin{align}
E_{t,n}=\left|u_{t,n}-\hat{u}_{t,n}\right|\in\mathbb{R}_+^{64},
\end{align}
where the absolute value is taken componentwise.

The surrogate model $\hat{F}$ is a fully connected neural network consisting of an encoder and a prediction head. The encoder maps the input state $u_t\in\mathbb{R}^{64}$ to a latent feature vector
$\phi(u_t)\in\mathbb{R}^p$,
where $p=16$ in the present experiment. The prediction head then maps $\phi(u_t)$ to the one-step surrogate prediction $\hat{F}(u_t)$. The latent representation $\phi$ from the final trained surrogate is held fixed and used to define the geometry underlying the anchor-based error estimator. This ensures that all signed displacement features are computed in a common latent coordinate system across calibration folds.

For KS equation, GeoQ is evaluated recursively along the surrogate rollout. Specifically, let $\hat{E}_{t,n}$
denote the estimated surrogate error at rollout step $n$, and let $\psi_{t,n}=\psi_{k_a,k}(\hat{u}_{t,n})$ denote the feature vector constructed from the fixed-representation anchor displacement and the local support-distance feature. The increment model $g_\eta:\mathbb{R}^{p+1}\rightarrow \mathbb{R}_+^{64}$ is trained using the componentwise pinball loss at quantile level $\tau=0.95$. At each rollout step, the error estimate is updated by
\begin{align}
\hat{E}_{t,n+1}
=\hat{E}_{t,n}+g_\eta(\psi_{t,n}),\qquad n=0,\ldots,n_{\rm rollout}-1.
\end{align}
Consequently, after \(n_{\rm rollout}\) rollout steps, we have
\begin{align}
\hat{E}_{t,n_{\rm rollout}}= \hat{E}_{t,0}+\sum_{n=0}^{n_{\rm rollout}-1}g_\eta(\psi_{t,n}).
\end{align}
This recursive construction propagates uncertainty along the same autoregressive trajectory used by the surrogate prediction.

To generate calibration data, we use a leave-one-cluster-out procedure based on $K$-means clustering with $K_f=5$. For each fold $\ell$, an auxiliary surrogate $\hat{F}^{(-\ell)}$ is trained using the data outside the held-out cluster and is evaluated autoregressively on initial conditions from the held-out cluster. The representation map $\phi$, however, is kept fixed across folds and is not retrained. The fold rollouts provide cross-fitted estimates of the componentwise rollout error increments used to train $g_\eta$. The number of local anchors is set to $k_a=50$ as the use of multiple anchors is expected to improve the stability of the anchor error baseline for chaotic autoregressive rollouts.

In this experiment, both the surrogate and GeoQ models are trained in the autoregressive rollout setting. The surrogate model is first trained on one-step pairs and then fine-tuned using a rollout loss obtained by recursively applying $\hat{F}$ over multiple time steps. During calibration, GeoQ rollout tuples are generated by rolling out the surrogate for $n_{\rm rollout}=300$ steps from each calibration initial condition. At each horizon, the componentwise error field is computed from the difference between the true trajectory and the surrogate trajectory, and $g_\eta$ is trained to estimate the corresponding rollout error increment. At test time, the trained GeoQ model is applied recursively for longer rollouts up to $600$ steps, allowing us to evaluate the extrapolation capacity of GeoQ beyond the calibration horizon.

\begin{figure}[t]
\centering

\subfloat[Spatially averaged error]{
  \adjustbox{valign=c}{%
    \includegraphics[width=0.30\textwidth]{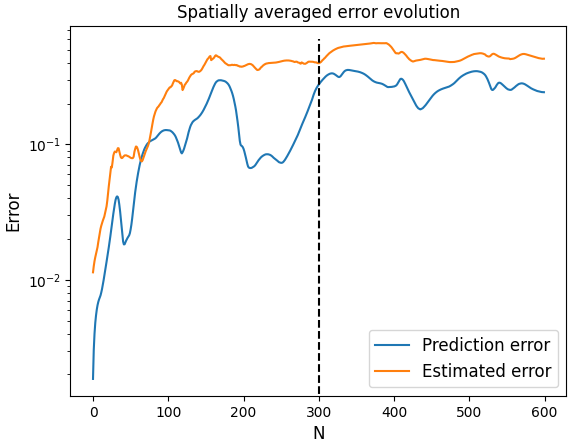}\label{ks_visu_a}
  }
}
\quad
\subfloat[Entire domain error]{
  \adjustbox{valign=c}{%
    \includegraphics[width=0.60\textwidth]{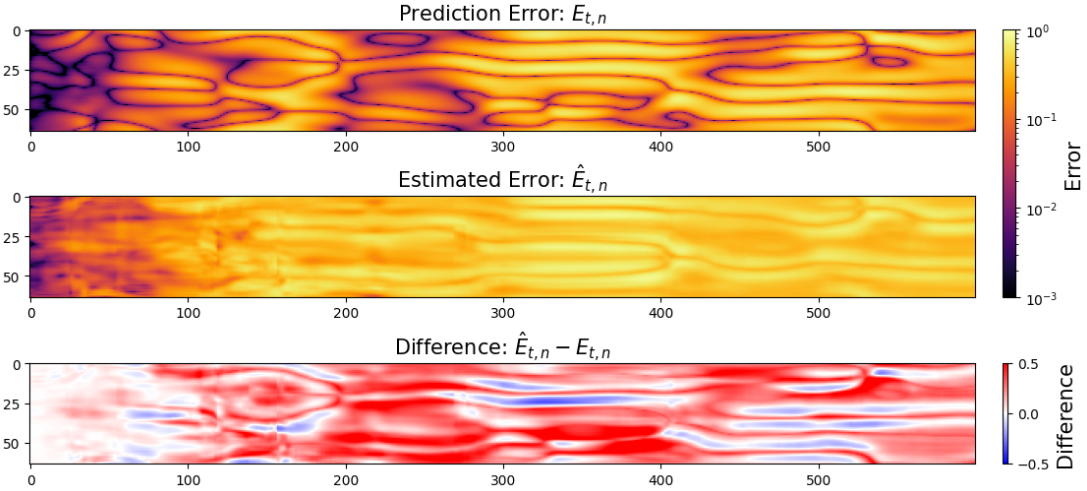}\label{ks_visu_b}
  }
}
\caption{\textit{KS equation:} Visualization of the surrogate's prediction errors and the estimated errors by the proposed GeoQ approach.}
\label{ks_visu}
\end{figure}

Figure~\ref{ks_visu} shows the spatially averaged error, the componentwise KS rollout error, and the corresponding GeoQ estimate over a $600$-step autoregressive forecast for a representative test initial condition. GeoQ is calibrated using rollout trajectories of length $n_{\rm rollout}=300$, indicated by the dashed vertical line in the spatially averaged error plot, and is then evaluated beyond this calibration horizon. The estimated error captures the rapid early growth and the subsequent saturation of the rollout error. In particular, the spatially averaged estimate remains close to or above the true mean error both within and beyond the calibration horizon, indicating that the recursive GeoQ model learns the dominant long-horizon error-growth trend. On the entire domain, however, the estimate is smoother than the true error field and does not fully resolve the narrow coherent error structures produced by chaotic phase drift. This leads to moderate pointwise correlation and some localized undercoverage, even though the average error magnitude is captured reasonably well. These results suggest that GeoQ provides a useful conservative estimate of long-horizon error growth, while fine-scale spatial localization remains challenging for chaotic autoregressive rollouts.

\subsection{WeatherBench dataset}

We consider the problem of learning a surrogate dynamical model for large-scale atmospheric evolution using coarse-resolution reanalysis data. The state variable
$X(t) \in \mathbb{R}^{H \times W \times C}$
represents a global field defined on a regular latitude–longitude grid (here $H=32$, $W=64$) with $C=3$ channels corresponding to geopotential height at $500$ hPa $(z_{500})$, temperature at $850$ hPa $(T_{850})$, and $2$ m temperature $(T_{2m})$.

Given a sequence of past states
$\{X(t-2), X(t-1), X(t)\}$
the goal is to predict a future trajectory over a horizon of $10$ days (that is, $40$ time steps at 6-hour resolution)
$\{X(t+1), X(t+2), \dots, X(t+40)\}$. The surrogate model therefore takes the following form:
\begin{align}
\hat{F} : \big(X(t-2), X(t-1), X(t)\big) \mapsto \{X(t+\tau)\}_{\tau=1}^{40}.
\end{align}

The surrogate model is implemented as a convolutional encoder–decoder (U-Net). The training is performed using a weighted mean-squared-error loss, with increasing weights for larger lead times to mitigate error accumulation.

The representation map $\phi(x)$ in the proposed GeoQ approach is chosen to be the identity function (i.e $\phi(x)=x$). To generate calibration data (with a leave-one-cluster-out procedure), we use K-means clustering procedure with $K_f=5$. The increment model $g_\eta:\mathbb{R}^{3\times32\times64\times3}\times\mathbb{R}\rightarrow\mathbb{R}^{40\times32\times64\times3}_+$ \bogus{$g_\eta: \mathbb{R}^{3\times32\times64\times3+1}\rightarrow \mathbb{R}^{40\times32\times64\times3}_+$} is a U-Net with multi-scale feature extraction. The scalar input is embedded through a small multilayer perceptron and broadcast spatially before being concatenated with the physical fields. Training is performed using a weighted pinball loss, with increasing weights for larger lead times to mitigate error accumulation.

We consider a subset of the WeatherBench dataset \citep{rasp2020weatherbench} for training and testing. The surrogate models are trained on $150$ consecutive days of data, and their predictive performance is evaluated over the subsequent $10$-day period following the final training day.

Figure \ref{WeatherBench_visu} illustrates the surrogate predictions and the corresponding estimated errors by GeoQ for the final forecast step (day $10$). Overall, the surrogate captures the large-scale atmospheric structures reasonably well across all variables, although localized discrepancies become visible in regions with stronger spatial variability. The proposed error estimator successfully identifies these high-error regions, producing spatial patterns that closely resemble the true prediction error fields. In particular, the estimated errors recover both the location and the relative intensity of the dominant error structures. However, the behavior varies somewhat across variables. For $z_{500}$ and $T_{850}$, the estimated error fields closely follow the magnitude and spatial localization of the true errors, indicating that the input-space geometry remains informative over the 10-day forecasting horizon. In contrast, the $T_{2m}$ field exhibits a more conservative uncertainty estimate, with broader regions of elevated predicted error. 

The strong agreement between the predicted and true error maps suggests that the proposed GeoQ framework captures meaningful relationships between input-space displacement, support structure, and local surrogate sensitivity. Despite being calibrated only from historical training data, the method generalizes effectively to temporally shifted forecasting conditions and remains responsive to dynamically difficult regions. These results demonstrate that the proposed approach can provide spatially resolved and physically consistent uncertainty estimates for medium-range surrogate weather forecasting.

\begin{figure}[H]
  \centering
  \subfloat[$z_{500}$]
{\includegraphics[width=\textwidth]{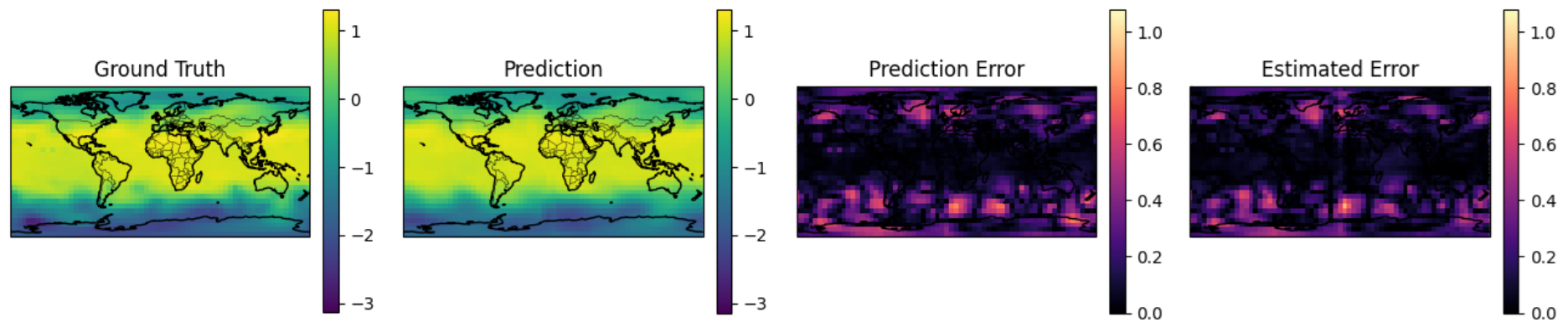}}
\quad
  \subfloat[$T_{850}$]{\includegraphics[width=\textwidth]{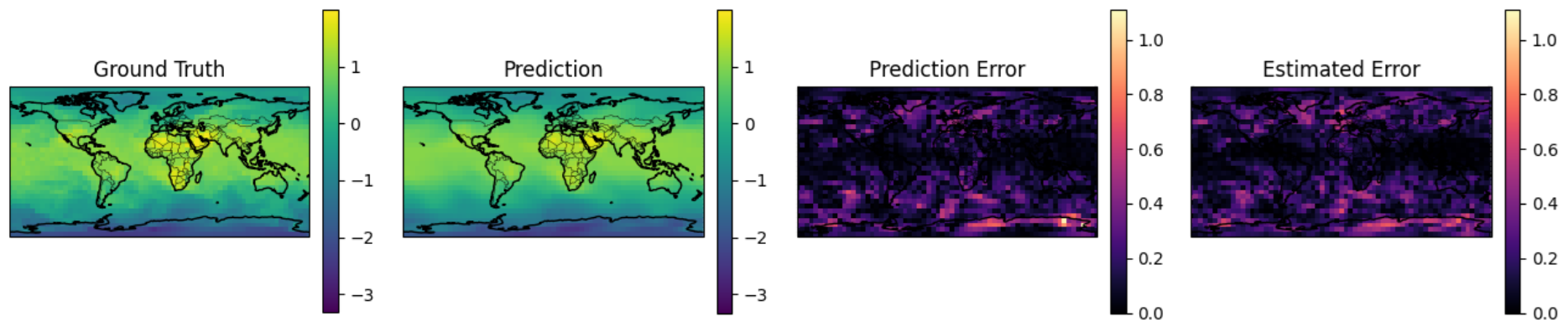}}
  \quad
\subfloat[$T_{2m}$]
{\includegraphics[width=\textwidth]{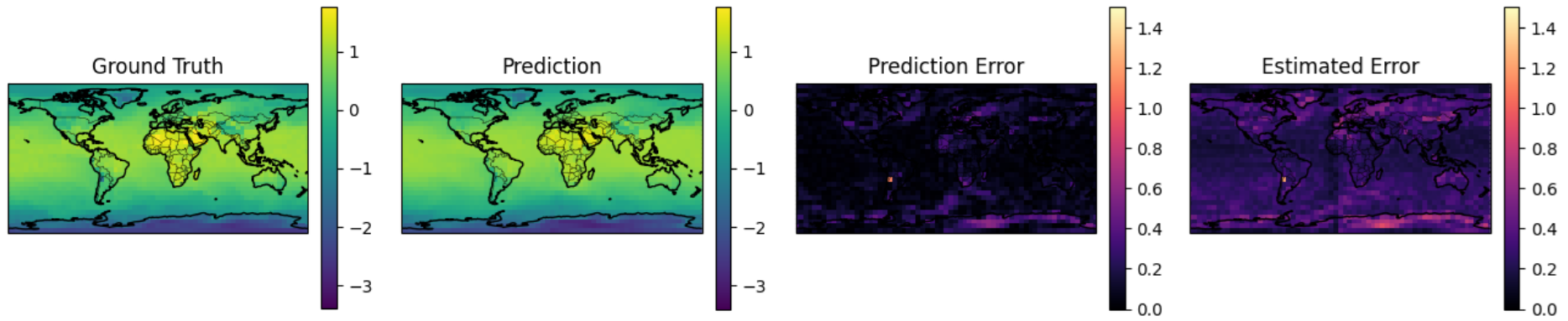}}
% \quad
  \caption{\textit{WeatherBench dataset: }Visualization at day $10$ of the surrogate prediction and the estimated error by the proposed GeoQ approach.}\label{WeatherBench_visu}
\end{figure}

\subsection{Richtmyer–Meshkov instability}
The Richtmyer–Meshkov instability (RMI) describes the growth of perturbations at an interface separating fluids of different densities after impulsive acceleration by a shock wave. In the present work, we consider a canonical two-dimensional planar Richtmyer-Meshkov instability in a shock-tube configuration with slab geometry, motivated by recent experimental studies \citep{schalles2024shock}. The flow evolves from an initially perturbed density interface separating gases of different densities, producing highly nonlinear roll-up structures, vortex interactions, and mixing layers over time. The RMI problem provides a particularly challenging benchmark for uncertainty estimation because the instability transitions from relatively smooth interface evolution at early times to strongly nonlinear vortex-dominated dynamics at later times. Consequently, uncertainty estimates must capture the global growth of prediction error and the spatial localization of dynamically unstable regions associated with interface roll-up and turbulent mixing \citep{zhai2018review}.

We evaluate the proposed GeoQ framework on the RMI problem using a dataset consisting of $1000$ simulations on a $240\times480$ grid. The surrogate model is trained to predict the temporal evolution, which includes 12 temporal snapshots, of the binary density field from the initial condition
\begin{align}
\hat{F} : u_0 \mapsto \{u_t\}_{t=1}^{12}.
\end{align}

The dataset consisting of $1000$ simulations is partitioned into $10$ folds using a $K$-means clustering procedure applied to the initial conditions. One fold is selected as the testing set, while the remaining nine folds are combined to form the training set. This strategy ensures that the testing data contained physically distinct initial conditions rather than random samples drawn from the same local distribution. As a baseline surrogate model, we employed a standard Fourier Neural Operator (FNO) trained directly on the binary density fields using a weighted binary cross-entropy and Dice loss.

The representation map $\phi(x)$ in the proposed error estimator approach is chosen to be the identity function (i.e $\phi(x)=x$). To generate calibration data (with a leave-one-cluster-out procedure), we use K-means clustering procedure with $K_f=10$. The increment model $g_\eta:\mathbb{R}^{240\times480}\times\mathbb{R} \rightarrow \mathbb{R}^{240\times480}_+$ \bogus{$g_\eta: \mathbb{R}^{240\times480+1}\rightarrow \mathbb{R}^{240\times480}_+$} is a U-Net-style convolutional neural network with FiLM-based conditioning layers, enabling the scalar nearest-neighbor metric to modulate intermediate feature representations throughout the network. Training the increment network presents a significant challenge because the target error fields are highly sparse. For most spatial locations, the surrogate prediction is already accurate, resulting in near-zero error throughout large portions of the domain. The dominant errors are concentrated near the material interface, particularly in regions of strong roll-up and filament formation. We employ a weighted loss formulation that minimizes the pinball loss (with the quantile level $\tau=0.95$) while simultaneously penalizing false-positive predictions in the background region.

\begin{figure}[t]
  \centering
%   \subfloat[$t=1$]
% {\includegraphics[width=\textwidth]{figures/paper_UQ_RMI_visu1.png}}
% \quad
  \subfloat[$t=6$]{\includegraphics[width=0.8\textwidth]{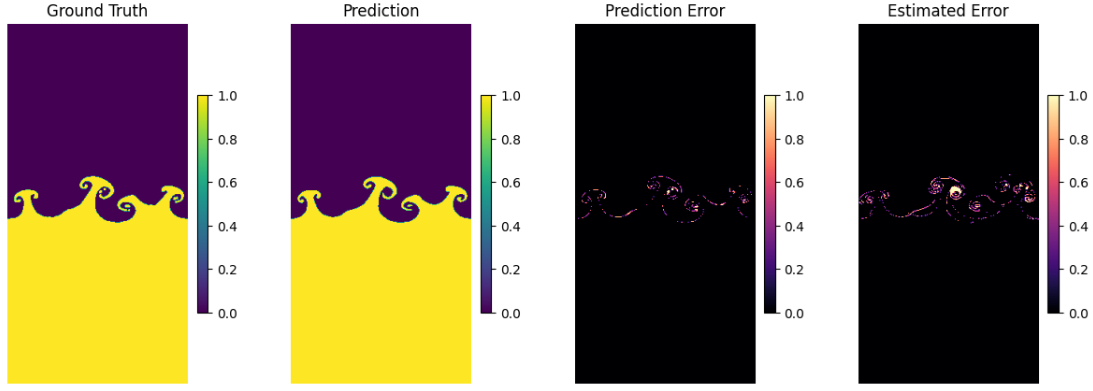}}
  \quad
\subfloat[$t=12$]
{\includegraphics[width=0.8\textwidth]{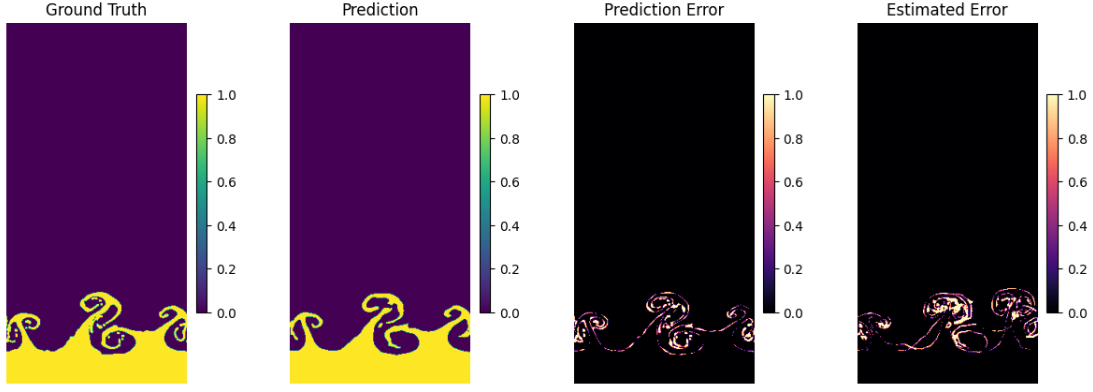}}
% \quad
  \caption{\textit{RMI problem:} Visualization of the surrogate prediction and the estimated error by the proposed GeoQ approach at different times for the median test case. The ground truth and the surrogate prediction are binary images for RMI evolution. The field represents the volume fraction of heavy gas, with $1$ denoting regions occupied by heavy gas and $0$ denoting regions with light gas.}\label{rmi_visu}
\end{figure}

Figure \ref{rmi_visu} presents representative results for the RMI problem at two different evolution times. At the earlier time step ($t=6$), the surrogate accurately captures the large-scale morphology of the instability, including the roll-up structures along the interface. The true prediction error remains localized near regions of interface deformation and vortex formation. The proposed GeoQ approach successfully identifies these localized error regions and reproduces the dominant spatial structures of the true error field. At the later time step ($t=12$), the flow becomes substantially more nonlinear due to the growth of small-scale vortical structures and increased interface complexity. Although the surrogate prediction still preserves the overall instability pattern, localized discrepancies become more pronounced around the rolled-up interfaces and thin filamentary regions. The estimated error field remains strongly correlated with the true error distribution and captures the increase in error concentration associated with the more complex dynamical regime.

These results indicate that the GeoQ framework is able to track the spatial evolution of surrogate uncertainty in a highly nonlinear dynamical system. In particular, the estimator remains sensitive to regions where instability growth and vortex interactions amplify prediction errors, suggesting that the learned geometric features successfully encode meaningful information about the evolving dynamics. Overall, the results demonstrate that the proposed method can provide spatially localized uncertainty estimates for long-horizon surrogate predictions of complex fluid instabilities.

\subsection{Quantitative results and safe-region analysis}\label{sec_quantitative}

We now present quantitative comparisons of the proposed GeoQ framework against several established uncertainty quantification approaches across the considered benchmark problems. Since the proposed method explicitly combines uncertainty estimation with a validity-aware safe-region mechanism, we report both global performance metrics and metrics restricted to the learned safe region. The global results (Table \ref{tab:global_results}) evaluate performance over the entire testing set, including both near-support and out-of-distribution samples. In contrast, the safe-region results (Table \ref{tab:safe_region_results}) evaluate only those samples whose geometric features remain sufficiently represented by the calibration data according to the feature-space density criterion introduced in Section \ref{sec:method}. This distinction is important because GeoQ provides error estimation with an explicit indication of whether the query lies in a calibration-supported region.

We note that the proposed GeoQ framework also contains several problem-dependent design choices, including the number of folds $K_f$ used in the cross-fitting procedure, the number of local anchors $k_a$, the number of neighbors $k$ used in the support feature, and the choice of the representation map $\phi$. These quantities influence the geometric structure of the calibration tuples and therefore affect the resulting uncertainty estimates. In practice, the optimal configuration depends on the dimensionality, dynamics, and distributional structure of the underlying problem. Performing a comprehensive ablation study across all considered benchmarks would therefore substantially increase the scope and computational cost of the present work. Instead, we provide representative ablation studies on the KS equation in \ref{append:ablation}, where we analyze the sensitivity of GeoQ to the principal hyperparameters and representation choices. The KS equation is selected because it exhibits nonlinear spatiotemporal dynamics while remaining computationally tractable for repeated calibration experiments.

\subsubsection{Global quantitative comparison}

We compare the proposed GeoQ framework against several established uncertainty quantification approaches, including Gaussian processes (GPs), MC dropout, deep ensembles, and split conformal prediction. MC dropout is used as a scalable practical approximation to Bayesian neural-network inference \citep{gal2016dropout}, where dropout at test time induces an approximate posterior predictive distribution. 

GeoQ directly predicts an upper estimate of the surrogate error through conditional quantile modeling. Split conformal prediction similarly produces explicit upper error bounds through calibration residual quantiles. In contrast, GPs, MC dropout, and deep ensembles naturally produce predictive standard deviations rather than direct error estimates. To enable a consistent comparison across methods, we convert these uncertainty estimates into approximate error bounds using $\widehat E(x)=1.96\sigma(x)$,
where $\sigma(x)$ denotes the predictive standard deviation. Under a Gaussian assumption, this corresponds to an approximate $95\%$ confidence interval. Although this quantity is not strictly equivalent to the conditional quantile estimate produced by GeoQ, it provides a common approximation for comparing uncertainty magnitudes across probabilistic surrogate modeling approaches. Additional implementation details for all baselines are provided in \ref{append_baseline}, while the surrogate model accuracy is reported in \ref{append_surrogate_accuracy}.

Table \ref{tab:global_results} compares the proposed GeoQ method against other approaches across all testing samples, including both near-support and out-of-distribution regions. We note that GP baselines are omitted for WeatherBench and RMI due to the high dimensionality of the input and output fields and the prohibitive scaling of exact GP inference. Besides that, correlation is omitted for conformal prediction when the conformal bound is a constant. For autoregressive baselines (in KS equation), the reported inference time includes all stochastic or ensemble rollout trajectories used to compute predictive standard deviations, whereas GeoQ and conformal prediction require only a single surrogate rollout plus a deterministic error-estimation step. Overall, the GeoQ framework achieves performance that is consistently competitive with existing methods while providing a geometry-aware and validity-aware formulation for uncertainty estimation.

\begin{table}[t]
\centering

\caption{Global performance comparison on the entire testing set, including both near-support and out-of-distribution samples. The proposed GeoQ framework is evaluated without restricting predictions to the learned safe region. Best results are shown in bold and second-best results are underlined.
}
\label{tab:global_results}
% \resizebox{\columnwidth}{!}{%
\begin{tabular}{llccccc}
\toprule
\textbf{Dataset} & \textbf{Method} & \textbf{Pinball} $\downarrow$ & \textbf{Correlation} $\uparrow$ & \textbf{Coverage} $\uparrow$ &
\textbf{Train. Time} $\downarrow$&
\textbf{Inf. Time} $\downarrow$ \\
\midrule

\multirow{3}{*}{Forrester eq.}
& GP     & \textbf{0.001} & 0.799 & \textbf{1.000}  & 20 s & 0.02 s \\
& MC Dropout     & 0.300 & 0.977 & 0.150  & 40 s  & 4 s \\
& Deep Ens.  & 0.303 & \underline{0.984} & 0.487  & 400 s & 4 s\\
& Conf. Pred. & 0.298  & - & 0.490 & 30 s & 0.6 s \\
& GeoQ (ours)     & \underline{0.117} & \textbf{0.989} & \underline{0.804}  & 500 s & 1.2 s \\

\midrule
\multirow{3}{*}{KS eq.}
& GP     & \underline{0.021} & \textbf{0.551} & 0.719 & 158 min & 9 min \\
& MC Dropout     & 0.035 & 0.108 & 0.706 & 9 min & 5 min\\
& Deep Ens.   & 0.039 &  0.265 &0.718 & 45 min & 5 min\\
& Conf. Pred.  & 0.023 & 0.372 & \underline{0.840} & 7 min & 2 min\\
& GeoQ (ours)  & \textbf{0.019} & \underline{0.490} & \textbf{0.868} & 65 min & 3.6 min\\

\midrule

\multirow{3}{*}{WeatherBench}
& MC Dropout     & 0.038 & 0.205 & 0.677    & 3 min & 20 s\\
& Deep Ens.  & 0.029 & \underline{0.612} & 0.681  & 30 min & 20 s\\
& Conf. Pred. & \underline{0.009} & 0.415 & \textbf{0.921}  & 2.5 min & 6 s \\
& GeoQ (ours)           & \textbf{0.006} & \textbf{0.763} & \underline{0.902} & 36 min & 1.2 s\\

\midrule
\multirow{3}{*}{RMI}
& MC Dropout     & 0.004 & 0.384 & 0.995  & 1.1 h & 3 min\\
& Deep Ens.  & \underline{0.002} & \textbf{0.645} & \underline{0.997}  & 11.0 h & 3 min\\
& Conf. Pred. & 0.004 & 0.377 & 0.990  & 1 h&  26 s\\
& GeoQ  (ours)    & \textbf{0.001} & \underline{0.501} & \textbf{1.000} & 12.5 h & 51 s\\

% \midrule
% \multirow{3}{*}{Reacting Flow}
% & MC Dropout     &  &  &    &  & \\
% & Deep Ens.  &  5.5$\times 10^{-3}$ & 0.657 & 0.910 &  227 h & 11 min \\
% & Conf. Pred. &  6.7$\times 10^{-3}$&  0.255&  0.947&   57 h & 5.5 min \\
% & GeoQ  (ours)    &  &  &  &  & \\

\bottomrule
\end{tabular}
% }
\end{table}

For the low-dimensional Forrester benchmark, Gaussian process regression achieves the best pinball loss and perfect coverage, which is expected due to the smooth structure and small dimensionality of the problem. Nevertheless, the GeoQ approach achieves the highest correlation between predicted and true errors while maintaining competitive coverage, indicating that the learned geometric representation captures meaningful local error variations. \textcolor{black}{This comparison should be interpreted together with the surrogate-error results in ~\ref{append_surrogate_accuracy}. For the Forrester benchmark, the GP baseline uses a different surrogate model and achieves a substantially smaller relative $\mathcal L^2$ error than the neural-network surrogates. Thus, the strong GP performance reflects both the suitability of Gaussian process regression for this low-dimensional smooth problem and the behavior of its uncertainty estimate. The neural-network baselines provide a more direct comparison to GeoQ in terms of base-surrogate class.}

For the KS equation, GeoQ achieves the lowest pinball loss and the highest coverage among all methods, indicating that it provides the most effective upper error estimate for the long-horizon autoregressive rollout task. Although GP obtains a slightly higher pointwise correlation, it substantially undercovers the true error, while GeoQ better balances calibration quality and spatial tracking. The moderate correlation of GeoQ reflects the difficulty of localizing fine-scale error structures in chaotic rollouts, where small phase shifts can strongly affect pointwise agreement. This result demonstrates \textcolor{black}{GeoQ's} capacity of providing reliable and computationally efficient uncertainty estimates in a challenging extrapolative and autoregressive regime.

% For the KS equation, GeoQ achieves the lowest pinball loss among all methods and the second-highest coverage, indicating that the anchor-averaged conditional quantile model provides sharp and well-calibrated upper error estimates for chaotic autoregressive rollout. Although GP regression attains the highest correlation and coverage, it is substantially more expensive at inference time. GeoQ therefore offers a favorable tradeoff between calibration quality and computational efficiency, achieving the best pinball loss while maintaining competitive coverage under a fair recursive uncertainty propagation setting.

On WeatherBench data, GeoQ achieves the best pinball loss, and correlation while maintaining coverage close to conformal prediction. This indicates that GeoQ provides both calibrated and spatially coherent error estimates for medium-range weather forecasting. The result suggests that input-space displacement and support information are effective for capturing localized atmospheric prediction errors.
% , and SSIM,

For \textcolor{black}{the} RMI problem, GeoQ obtains the best pinball loss and perfect coverage, demonstrating strong calibration for localized instability-driven errors. Deep ensembles achieve higher correlation, suggesting better spatial ranking of the error field. Nevertheless, GeoQ provides the most reliable upper error bounds, which is important for conservative uncertainty estimation in sparse interface-dominated error fields.
%and SSIM

%Across all benchmarks, the proposed GeoQ approach consistently achieves either the best or second-best performance for the majority of evaluation metrics. In particular, the method demonstrates a favorable tradeoff between calibration quality, structural consistency, and computational efficiency at inference time. These results support the effectiveness of GeoQ for uncertainty estimation in scientific surrogate modeling problems.
Across the selected benchmarks, GeoQ achieved competitive performance relative
to the considered baselines. GeoQ represents a methodologically distinct
approach to surrogate-error estimation, with different strengths and
limitations depending on the structure of the data, the available calibration
set, and the computational budget. In particular, its value lies in combining
query-dependent error estimation with an explicit measure of calibration
support, while requiring additional surrogate training during cross-fitted
calibration.

\subsubsection{Validity-aware performance}
Table \ref{tab:safe_region_results} reports the performance of the proposed GeoQ method restricted to the learned safe region. The safe region is defined using the feature-space kNN validity criterion introduced in Section \ref{sec:method}, where the safe threshold $\tau_{\mathrm{safe}}$ is chosen as the $95^{th}$ quantile of the calibration support scores. The results show that restricting predictions to samples lying inside the calibration-supported feature region improves the calibration quality of the estimated uncertainty relative to the global results reported in Table \ref{tab:global_results}. 

For the Forrester equation, the method achieves perfect coverage and nearly zero pinball loss on approximately $58\%$ of the testing samples, indicating that the learned geometry successfully identifies regions where the conditional quantile model remains highly reliable. For the KS equation, RMI, and WeatherBench datasets, the safe fraction remains high, while the corresponding safe-region metrics remain consistently competitive. For KS equation and RMI problem, the safe-region restriction modestly improves coverage while retaining a large fraction of samples. For WeatherBench, all test samples remain within the learned safe region, so the safe-region metrics coincide with the global metrics.

% For the more challenging KS and WeatherBench cases, the safe-region coverage remains substantially improved relative to the corresponding global uncertainty estimates.

These results demonstrate that GeoQ can additionally provide
a validity-aware indication of whether the learned geometric calibration is supported by the available data.
This validity-aware behavior is particularly important in extrapolative and distribution-shifted settings, where global uncertainty estimates alone may become unreliable.

% These results highlight an important property of the proposed GeoQ framework: rather than assuming reliability everywhere, the method explicitly identifies regions where the learned geometric calibration remains supported by the available data. This validity-aware behavior is particularly important in extrapolative and distribution-shifted settings, where global uncertainty estimates alone may become unreliable.

\begin{table}[t]
\centering
\caption{
Validity-aware performance of the proposed method inside the learned safe region.
Coverage$_{\mathrm{safe}}$ and Pinball$_{\mathrm{safe}}$ are computed only on samples satisfying
$S(\psi)\le\tau_{\mathrm{safe}}$.
Higher Coverage and Safe Fraction are better, lower Pinball is better.
}
\label{tab:safe_region_results}
\renewcommand{\arraystretch}{1.15}
\begin{tabular}{lcccc}
\hline
\textbf{Problem}
& \textbf{Safe Fraction} 
& \textbf{Pinball}$_{\mathrm{safe}}$ 
& \textbf{Correlation}$_{\mathrm{safe}}$ 
& \textbf{Coverage}$_{\mathrm{safe}}$ 
\\
\hline

Forrester equation
& 0.582
& 0.000
& 0.978
& 1.000
\\

KS equation
& 0.900
& 0.017
& 0.495
& 0.891
\\

% RMI (Mode OOD)
% & 1.000
% & 0.000
% & 0.452
% & 1.000
% \\

WeatherBench
& 1.000
& 0.006
& 0.763
& 0.902
\\

RMI 
& 0.959
& 0.001
& 0.502
& 1.000
\\

\hline
\end{tabular}
\end{table}

\section{Conclusion and Future work}\label{sec:conclusion}

In this work, we introduced GeoQ, a geometry-aware conditional quantile framework for error estimation in scientific surrogate models. The proposed method estimates the surrogate error at a query point by combining an anchor-averaged calibration error with a learned nonnegative correction based on geometric displacement and local support information. By training this correction as an upper conditional quantile of cross-fitted error increments, GeoQ provides spatially and temporally resolved estimates of surrogate error while avoiding overly conservative global worst-case bounds. In addition, the feature-space kNN safe-region criterion provides an explicit mechanism for identifying regions where the learned calibration remains supported by the available data.

Across the Forrester function, Kuramoto-Sivashinsky equation, WeatherBench dataset, and Richtmyer-Meshkov instability problem, GeoQ achieved competitive, and in several cases superior, calibration-oriented performance relative to Gaussian processes, MC dropout, deep ensembles, and split conformal prediction. The results show that GeoQ is particularly effective in terms of pinball loss, indicating strong calibration of upper-tail error estimates. For WeatherBench and RMI, GeoQ also produced spatially coherent error fields and reliable coverage, demonstrating its ability to estimate localized uncertainty in high-dimensional scientific prediction problems. For the KS equation, GeoQ achieved the lowest pinball loss and highest coverage among the compared methods, showing that the recursive GeoQ formulation provides an effective error-bound estimate for chaotic autoregressive rollouts. Although precise pointwise localization of fine-scale error structures remains challenging in this setting, GeoQ captures the dominant error-growth behavior and provides the best overall calibration-quality tradeoff.

Several questions remain for future work. The choice of representation map, calibration-fold construction, and number of neighbors used in the support score is problem dependent, and adaptive or data-driven procedures for selecting these components could improve robustness across applications. The cross-fitting procedure also requires training an auxiliary surrogate for each fold, which can dominate the calibration cost, motivating approaches that construct approximately out-of-sample error targets without training $K_f$ complete auxiliary surrogates. The current safe-region criterion is empirical, and stronger connections between feature-space support and conditional coverage would clarify when the resulting error estimates can be trusted. Recursive error propagation could also incorporate learned amplification factors or stability-aware dynamics to better represent long-horizon error growth in chaotic systems. Finally, GeoQ may be useful within inverse problems and optimization, where query-dependent error estimates could inform trust regions, penalties, or acceptance criteria for candidate solutions that lie away from observed training samples but remain within the calibration-supported region. The estimated error and support score could also guide adaptive data acquisition by identifying where additional high-fidelity simulations or experiments would most improve a surrogate or digital twin.

\section{Acknowledgements}
This work was supported by the Advanced Simulation and Computing (ASC) Program at Los Alamos National Laboratory. Los Alamos National Laboratory Report LA-UR-26-26528.

%noise

% \bibliographystyle{unsrtnat} 
\bibliography{refs} 
\appendix

\section{Model architectures and training configurations}
\label{append:model_architecture}

All surrogate and uncertainty models are trained using the Adam optimizer with learning rate \(10^{-3}\). Unless otherwise stated, training is performed using mini-batch stochastic optimization with early stopping based on validation loss. The surrogate architectures employed for the considered benchmark problems are summarized below.

\begin{itemize}

\item \textbf{Forrester equation:}
The surrogate model consists of a fully connected feedforward neural network with hidden-layer widths
$(64,\,64,\,16,\,32)$,
where the \(16\)-dimensional layer defines the latent representation map \(\phi\). ReLU activations are used for all hidden layers, followed by a linear output layer.

\item \textbf{KS equation:}
The surrogate model is implemented as a fully connected neural network with hidden-layer widths
$(128,\,128,\,16,\,128,\,128)$.
The intermediate \(16\)-dimensional latent layer is used as the representation map \(\phi\). ReLU activations are employed throughout the network.

\item \textbf{WeatherBench:}
For the WeatherBench forecasting problem, we use a convolutional encoder-decoder U-Net that maps a three-step atmospheric history to a 40-step forecast trajectory. The encoder consists of three downsampling levels with convolutional blocks using $32$, $64$, and $128$ filters, respectively. Each block contains two $3\times 3$ convolutional layers with GELU activations followed by dropout with rate $0.1$. A bottleneck block with $256$ filters is used at the coarsest resolution. The decoder mirrors the encoder using nearest-neighbor upsampling, skip connections from the corresponding encoder levels, and convolutional blocks with decreasing filter sizes. A final $1\times 1$ convolution produces $40\times 3$ output channels, which are reshaped and permuted to yield the multi-step forecast field.

\item \textbf{RMI:}
For the Richtmyer-Meshkov instability problem, we use a two-dimensional Fourier Neural Operator (FNO) that maps an initial binary density field of shape $240\times 480\times 1$ to a sequence of $12$ future density fields. The input field is first resized to an internal resolution of $256\times 512$, and normalized spatial coordinate channels are appended to provide positional information. The augmented input is lifted to a width of $64$ channels using a $1\times 1$ convolution. The main network consists of four FNO blocks, each combining a spectral convolution over the lowest $24\times 24$ Fourier modes with a local $1\times 1$ convolutional residual branch, followed by a GELU activation. After the Fourier layers, the representation is projected through a $1\times 1$ convolution with $128$ channels and a final $1\times 1$ convolution with sigmoid activation to produce the $12$ probability output fields. The output is then resized back to the original spatial resolution $240\times 480$.

\end{itemize}

For the increment model $g_\eta$ in GeoQ, we employ architectures similar to those used for the corresponding surrogate models, while modifying the input layers so that the networks take the geometric feature vector $\psi=(\Delta z_{k_a},\Delta d _k)$ as input. In particular, the latent displacement $\Delta z_{k_a}$ is treated as the primary geometric feature, while the scalar support score $\Delta d _k$ is incorporated through feature concatenation or conditioning layers depending on the underlying architecture.

All experiments are performed using a single NVIDIA GeForce RTX 2080 Ti GPU.

% The representation map $\phi$ is realized using two hidden layers with $64$ ReLU units each, followed by a linear latent layer. The surrogate prediction is then obtained through a regression head composed of one hidden layer with $32$ ReLU units and a final linear output layer:
% $$
% x \xrightarrow{\phi} z\in\mathbb{R}^{p}
% \xrightarrow{h} \hat{u}(x)
% $$
% The surrogate is trained using mean-squared error loss with the Adam optimizer, learning rate $10^{-3}$, batch size $32$, and $250$ training epochs.

\section{Ablation studies on KS equation}\label{append:ablation}

This appendix presents representative ablation studies for the proposed GeoQ framework using the KS equation benchmark. The objective of these experiments is to examine the sensitivity of the method to several key design choices, including the representation map $\phi$, the geometric feature construction $\psi$, the number of local anchors $k_a$, and the number of folds employed in the cross-fitting procedure $K_f$. The KS equation is selected for these studies because it exhibits nonlinear spatiotemporal dynamics while remaining computationally tractable for repeated calibration experiments. Since the optimal configuration of these components is generally problem-dependent, we restrict the detailed sensitivity analysis to this representative benchmark.

\begin{table}[H]
\centering

\caption{Comparison of different choices in the proposed approach on KS equation.
}

\label{tab:ks_results_append}
\begin{tabular}{lccc}
\toprule
\textbf{Choice} & \textbf{Pinball} & \textbf{Correlation} & \textbf{Coverage}\\

\midrule

$\phi=\operatorname{Id}_{\mathcal X}$            &  0.024 & 0.412 &    0.856\\
$\phi=\widehat F_{\mathrm{encoder}}$           &  \textbf{0.019} & \textbf{0.490} &   \textbf{0.868} \\

\midrule

$\psi=[\Delta z_{k_a}, \Delta d_k]$            &  \textbf{0.019} & \textbf{0.490} &   \textbf{0.868} \\
$\psi=[|\Delta z_{k_a}|, \Delta d_k]$            & 0.021  & 0.483 & 0.835   \\
$\psi=[||\Delta z_{k_a}||_2, \Delta d_k]$           &  0.020 & 0.481 & \textbf{0.868}   \\
$\psi=\Delta z_{k_a}$      & 0.021       & 0.478 & 0.832    \\
\midrule

$k_a=1$            & \textbf{0.017} & \textbf{0.505} &  \textbf{0.901}  \\
$k_a=10$           & 0.019 & 0.473 & 0.881   \\
$k_a=50$          &  0.019 & 0.490 &   0.868 \\
$k_a=100$          & 0.018 & 0.491  & 0.875   \\

\midrule

$K_f=2$            & 0.057& 0.243 & 0.549\\
$K_f=5$         &  \textbf{0.019} & \textbf{0.490} &   \textbf{0.868} \\
$K_f=10$           & 0.023 & 0.457 & 0.834   \\
$K_f=20$           & 0.024 & 0.444 &  0.812   \\

\bottomrule
\end{tabular}
\end{table}

The ablation results in Table \ref{tab:ks_results_append} show that the proposed GeoQ configuration is sensitive to the representation map, feature construction, anchor size, and number of calibration folds. In each block, one design choice is varied while the remaining settings are fixed to the following default configuration: $\phi=\widehat F_{\mathrm{encoder}}$, $\psi=(\Delta z_{k_a},\Delta d_k)$, $k_a=50$, and $K_f=5$.

The choice of representation map has a clear effect on performance. Using the surrogate encoder representation $F_{\mathrm{encoder}}$ improves all metrics compared with the identity map. This indicates that the learned latent representation provides a more informative geometry for anchor selection and error-increment modeling than the raw input coordinates.

The feature ablation shows that including the signed representation displacement $\Delta z_{k_a}$ together with $\Delta d_k$ gives the best overall balance between pinball loss, correlation, and coverage. We note that all signed displacement features are computed in a fixed shared representation space. Replacing $\Delta z_{k_a}$ with $|\Delta z_{k_a}|$ degrades all metrics, suggesting that directional information in representation space is useful for predicting error growth. Using only $||\Delta z_{k_a}||_2$ and $\Delta d_k$ yields slightly higher coverage but substantially lower correlation, indicating that scalar displacement magnitude alone loses important directional information. Removing $\Delta d_k$ also reduces performance, confirming that local support information contributes to calibration.

The anchor-neighborhood ablation shows that $k_a=1$ gives the strongest performance among the tested values. We see that increasing $k_a$ worsens performance, suggesting that too large an anchor size can oversmooth local geometry.

% The anchor-neighborhood ablation shows that $k_a=50$ gives the strongest performance among the tested values. Compared with $k_a=1$, using multiple anchors substantially improves pinball loss, correlation, and coverage. This supports the interpretation that anchor averaging stabilizes the local error baseline in chaotic autoregressive rollouts, where a single nearest anchor may have an unrepresentative error. However, increasing to $k_a=100$ slightly worsens performance, suggesting that too large an anchor neighborhood can oversmooth local geometry.

Finally, the fold ablation indicates that $K_f=5$ provides the best tradeoff between calibration diversity and surrogate fidelity. Too few folds reduce the amount of stable training data for each cross-fitted surrogate, while too many folds produce held-out samples that are less geometrically separated from the training set, weakening the pseudo-OOD calibration signal. Overall, these ablations support the default KS configuration and show that GeoQ benefits from learned representations, directional displacement features, local support information, and moderate anchor averaging.

\section{Baseline uncertainty methods and implementation details}\label{append_baseline}

In this work, we compare GeoQ with Gaussian process regression, MC dropout, deep ensembles, and split conformal prediction. Below is the description for the implementation of each baseline method:
\begin{itemize}
    \item \textbf{Gaussian Process:} For the low-dimensional benchmarks where Gaussian process regression is computationally feasible, we use a Gaussian process with a constant kernel multiplied by a radial basis function kernel and an additive white-noise kernel. Inputs are standardized before fitting, and outputs are normalized during GP training. For vector-valued outputs, we use an independent multi-output formulation in which a separate GP is fit to each output coordinate. The GP predictive standard deviation is converted into an approximate error bound using $\widehat E(x)=1.96\sigma(x)$. This provides an input-dependent uncertainty estimate that naturally increases away from the training data under the chosen kernel geometry, but its scalability is limited for high-dimensional field-valued problems.
    \item \textbf{MC dropout:} For MC dropout, we modify the surrogate architecture by inserting dropout layers with dropout rate $0.1$. During inference, dropout remains active and the same input is evaluated multiple times to generate a Monte Carlo sample of predictions. The predictive standard deviation across these stochastic forward passes is then used as the uncertainty estimate and converted into an approximate error bound using $\widehat E(x)=1.96\sigma(x)$. As with deep ensembles, for field-valued outputs this procedure produces a coordinatewise uncertainty field. Unlike GeoQ, MC dropout does not explicitly use an anchor representation or support-based validity criterion. Uncertainty arises from stochastic variation in the network predictions.
    \item \textbf{Deep ensembles:} For deep ensembles, we train multiple independently initialized surrogate models and use the empirical variability of their predictions as the uncertainty estimate. To make the computational cost comparable to GeoQ, we use the same number of ensemble members as the number of calibration folds (denoted $K_f$ in section \ref{sec:method}). Each ensemble member uses the same surrogate architecture and training configuration, but is trained independently with a fixed random seed protocol. At inference time, the ensemble mean is used as the surrogate prediction, and the ensemble standard deviation is converted into an approximate error bound using $\widehat E(x)=1.96\sigma(x)$. For spatially distributed outputs, the standard deviation is computed coordinatewise, yielding an uncertainty field with the same shape as the surrogate output.
    \item \textbf{Conformal prediction:} For conformal prediction, we use split conformal calibration on pointwise absolute residuals. The resulting bound is a scalar for scalar-output problems and a coordinatewise residual quantile field for spatially distributed outputs. Thus, the conformal baseline provides a fixed, input-independent error bound with the same output shape as the surrogate prediction.
\end{itemize}

For MC dropout, dropout is kept active during inference and multiple stochastic rollout trajectories are generated. For deep ensembles, each ensemble member is rolled out independently. For GP, predictive uncertainty is obtained by Monte Carlo propagation of GP samples through the autoregressive rollout. The reported inference times include all trajectories used to compute predictive standard deviations. In contrast, GeoQ evaluates a deterministic surrogate rollout followed by a deterministic error-estimation model, and conformal prediction applies a fixed calibrated residual bound.

\section{Surrogate model accuracy}\label{append_surrogate_accuracy}
To ensure that the uncertainty comparisons are not primarily driven by differences in surrogate accuracy, we report in Table \ref{tab:surrogate_error_append} the prediction errors of the surrogate models used by each uncertainty method. For GeoQ, MC dropout, deep ensembles, and conformal prediction, the underlying surrogate architectures are essentially the same, except that MC dropout includes dropout layers. Their relative $L^2$ errors are broadly comparable across the high-dimensional benchmarks, particularly for KS, WeatherBench, and RMI. This supports the interpretation that differences in pinball loss, coverage, and correlation mainly reflect differences in the uncertainty-estimation procedure rather than substantially different base surrogate quality. 

\begin{table}[H]
\centering

\caption{Comparison of the surrogate model accuracy (in terms of relative $L^2$ error) in different approaches.
}

\label{tab:surrogate_error_append}
\begin{tabular}{lcccc}
\toprule \textbf{Method} &\textbf{Forrester eq.} & \textbf{KS eq.} & \textbf{WeatherBench}
& \textbf{RMI}\\

\midrule
GP & 0.093 & 0.429 &- & -\\
MC Dropout & 0.879 & 0.522 & 0.176 & 0.082 \\
Deep Ens. & 0.884 & 0.446 & 0.227 & 0.082 \\
Conf. Pred. & 0.945 & 0.492 & 0.172 & 0.087 \\
GeoQ & 0.868 & 0.484 & 0.168 & 0.083 \\
\bottomrule
\end{tabular}
\end{table}

The GP baseline uses its own surrogate model, so its relative error is not directly tied to the neural surrogate architecture. This explains why GP performs much better on the low-dimensional Forrester problem, where Gaussian process regression is well suited and achieves a much smaller relative error. However, for the KS equation, the GP surrogate error is comparable to the neural surrogates, while GP is not reported for WeatherBench and RMI due to scalability limitations. Overall, the table confirms that GeoQ is evaluated on surrogate predictions with accuracy similar to the neural-network baselines, making the uncertainty comparison more meaningful.

\end{document}